\documentclass{article} 
\usepackage{iclr2026_conference_arXiv,times}
\iclrfinalcopy


\usepackage{amsmath,amsfonts,bm}









\def\eqref#1{equation~\ref{#1}}









\def\1{\bm{1}}










\DeclareMathAlphabet{\mathsfit}{\encodingdefault}{\sfdefault}{m}{sl}
\SetMathAlphabet{\mathsfit}{bold}{\encodingdefault}{\sfdefault}{bx}{n}











\newcommand{\E}{\mathbb{E}}

\newcommand{\R}{\mathbb{R}}



\DeclareMathOperator*{\argmin}{arg\,min}

\usepackage{hyperref}
\usepackage{url}
\usepackage{etoolbox}

\usepackage{st}

\title{Error Propagation in Dynamic Programming: From Stochastic Control to Option Pricing}



\author{Andrea Della Vecchia \thanks{ {\small\texttt{andrea.dellavecchia@epfl.ch}}} \\
EPFL - Swiss Finance Institute (SFI)\\
Lausanne, Switzerland\\
\And
Damir Filipović \\
EPFL - Swiss Finance Institute (SFI)\\
Lausanne, Switzerland\\
}

%

\begin{document}

\maketitle

\begin{abstract}
This paper investigates theoretical and methodological foundations for stochastic optimal control (SOC) in discrete time. We start formulating the control problem in a general dynamic programming framework, introducing the mathematical structure needed for a detailed convergence analysis. The associate value function is estimated through a sequence of approximations combining nonparametric regression methods and Monte Carlo subsampling. The regression step is performed within reproducing kernel Hilbert spaces (RKHSs), exploiting the classical KRR algorithm, while Monte Carlo sampling methods are introduced to estimate the continuation value. To assess the accuracy of our value function estimator, we propose a natural error decomposition and rigorously control the resulting error terms at each time step. We then analyze how this error propagates backward in time-from maturity to the initial stage-a relatively underexplored aspect of the SOC literature. Finally, we illustrate how our analysis naturally applies to a key financial application: the pricing of American options. 
\end{abstract}

\section{Introduction and Related Work}
Stochastic optimal control (SOC) provides a principled framework for sequential decision-making under uncertainty. It plays a foundational role in a wide range of scientific and engineering domains, including economics and finance~\cite{fleming2004stochastic, pham2009continuous, aastrom2012introduction}, robotics~\cite{gorodetsky2018high, theodorou2011iterative}, molecular dynamics~\cite{hartmann2012efficient, hartmann2013characterization, zhang2014applications, holdijk2023stochastic}, and stochastic filtering and data assimilation~\cite{mitter2002filtering, reich2019data}. More recently, SOC has inspired advances in machine learning, particularly in tasks such as sampling from unnormalized distributions~\cite{zhang2021path, berner2022optimal, richter2023improved, vargas2023denoising}, nonconvex optimization~\cite{chaudhari2018deep}, optimal transport~\cite{villani2008optimal}, and the numerical solution of backward stochastic differential equations (BSDEs)~\cite{carmona2016lectures}.

While continuous-time SOC has been extensively studied in the literature~\cite{bertsekas2012dynamic}, its discrete-time counterpart naturally arises in computational and data-driven applications, where decisions are made at fixed time intervals~\cite{bertsekas1996stochastic, puterman2014markov}. Despite its practical relevance, discrete-time SOC has historically received less theoretical attention and often presents greater challenges due to the absence of many of the mathematical tools available in continuous time. Nevertheless, it still offers opportunities for the development of scalable numerical methods, particularly through dynamic programming and function approximation. Discrete-time SOC is central to modern applications in operations research, financial engineering or reinforcement learning (RL)~\cite{sutton1998reinforcement}. At its core lies a dynamic programming (DP) recursion, where the value function is computed backward in time via the Bellman operator~\cite{bellman1966dynamic}. In high-dimensional settings, solving this recursion exactly is often infeasible, inspiring a large body of research focused on developing scalable and efficient approximations. These approaches typically estimate value functions from data using simulation or function approximation. In recent years, deep learning has greatly expanded the scalability of these methods, enabling their application to high-dimensional control problems~\cite{han2016deep, domingo2024stochastic}.

Despite this empirical progress, the theoretical understanding of learning-based SOC remains limited. A key challenge lies in quantifying how local errors deriving from function approximation, sampling noise, or optimization inaccuracies, propagate through the Bellman recursion over time. Studying this requires a rigorous and principled mathematical framework where to analyze the error accumulation in high-dimensional value function approximations. In this work, we propose such a framework based on reproducing kernel Hilbert spaces (RKHS), which enables us to derive explicit error bounds and control error propagation in approximate dynamic programming.

A classical application of discrete-time SOC is the pricing of American-style options, also known as Bermudan options when exercise opportunities are discrete. This problem can be formulated as a finite-horizon optimal stopping problem under stochastic dynamics. While such problems can, in principle, be solved exactly~\cite{peskir2006optimal, lamberton2011introduction}, well-established numerical methods, such as tree-based approaches or PDE solvers, struggle with the curse of dimensionality as complexity increases~\cite{broadie1997pricing, bally2003first, jain2012pricing}. To overcome this problem, Monte Carlo-based methods have spread in high dimensions applications. Notable examples include regression-based techniques~\cite{tsitsiklis1999optimal, longstaff2001valuing}, dual and hybrid primal-dual formulations~\cite{rogers2002monte, haugh2004pricing, andersen_primal-dual_2004, belomestny2013multilevel, lelong2018dual}, and Malliavin calculus methods for estimating conditional expectations~\cite{lions2001calcul, bouchard2004discrete, bally2005pricing, abbas2012american}. More recently, machine learning~\cite{williams2006gaussian} and deep learning approaches~\cite{kohler2010pricing, nielsen2015neural, becker2019deep, goudenege2020machine} have shown strong empirical performance in this domain. However, these methods often lack rigorous theoretical guarantees on accuracy and generalization.

Our work aims to bridge this gap by developing kernel-based algorithms for discrete-time SOC that come with provable convergence guarantees and theoretical error bounds, while keeping an eye on computational efficiency and scalability for big data applications.

\paragraph{Contribution}
In summary, our contributions are as follows. First, we propose a general RKHS-based formulation of approximate dynamic programming through backward induction. Second, we provide a rigorous decomposition of the total approximation error into three distinct components: regression error, Monte-Carlo sampling error, and propagation error. Third, we derive explicit convergence rates under model misspecification by leveraging source conditions. Finally, we show how this framework can be applied to various problems, especially in finance. We demonstrate the practical effectiveness of our algorithm through the well-known problem of American option pricing and preliminarily test its performance against some of the standard benchmark methods in the field.

\textbf{Organization} \; The paper is organized as follows. In Section~\ref{sec:setting} we introduce the problem and setting, with key definitions and notations used throughout the paper, and formalizing the problem in a precise mathematical framework. In Section~\ref{sec:Monte-Carlo} we introduce the Monte Carlo approximation and the regression step in the RHKS environment. In Section~\ref{sec:err} we study the error back-propagation, upper bounding the various approximation terms and finally showing the final error guarantees in Theorem~\ref{thm:err}. In Section~\ref{sec:simulations} we finally present some numerical results.


\section{Setting and Stochastic Control Model}
\label{sec:setting}

Consider a discrete time horizon $t=\{0, 1, \ldots, T\}$.  
We define a stochastic process $Z \coloneqq (Z_t)_{t=0}^T$ on a filtered probability space $(\Omega, \mathcal{F}, (\mathcal{F}_t^Z)_{t=0}^T, \mathbb{P})$, where $(\mathcal{F}_t^Z)_{t=0}^T$ is the natural filtration generated by $Z$.  
The random variables $Z_t$ are mutually independent (but not necessarily identically distributed) and take values in measurable spaces $\mathcal{Z}_t$.  
We denote by $\mathbb{P}(dz) = \prod_{t=0}^T \mathbb{P}_t(dz_t)$ the distribution of $Z$ on the path space $\mathcal{Z} = \mathcal{Z}_0 \times \cdots \times \mathcal{Z}_T$, which we identify with $\Omega$ without loss of generality. Any square-integrable adapted process, such as an asset price process, can be written in the form $X_t = X_t(Z_0, \ldots, Z_t)$, for some function $X_t \in L^2_{\mathbb{P}_0 \times \cdots \times \mathbb{P}_t}$.

Controlled Markov process $X_0^u, \ldots, X_T^u$ taking values in state spaces $\X_0,\dots,\X_T$ is defined by
\begin{equation}
\begin{cases}
	X_0^u = p_0(Z_0), \\
	X_{t+1}^u = \pi_t\big(X_t^u, u_t(X_t^u), Z_{t+1}\big), \quad t \in \{0, \ldots, T-1\},
\end{cases}
\end{equation}
where $p_0: \mathcal{Z}_0 \rightarrow \mathcal{X}_0$ is the initial state distribution, $\pi_t: \mathcal{X}_t \times \mathcal{U}_t \times \mathcal{Z}_{t+1} \rightarrow \mathcal{X}_{t+1}$ is a Markov transition function encoding how the system transitions from one state to the next, and $\boldsymbol{u} = (u_t)_{t=0}^{T-1}\in \boldsymbol{\U}$ is a stochastic control law, where each $u_t: \mathcal{X}_t \rightarrow \mathcal{U}_t$ is $\F_t$-measurable.

\begin{remark}
Note that this setup remains very general despite the Markovian assumption. For reasons related to dimensionality, it is typically assumed that $X_t^u$ summarizes the full history $Z_{0: t}, u_{0: t}$ in a compressed form, so that the control at time $t$ depends only on $X_t^u$, i.e., $u_t = u_t(X_t^u)$. This does not entail a loss of generality, as many important problems are naturally Markovian. Moreover, any optimal stopping problem can be cast in Markovian form by including all relevant past information in the current state, at the cost of increasing dimensionality.
\end{remark}

A control law $\boldsymbol{u}$ is said to be \emph{admissible} if the maps $(x, z) \mapsto \pi_t(x, u_t(x), z)$ satisfy suitable regularity conditions.  
In particular, we assume that the operator
\begin{align}
\label{eq:def_P}
P_t^u f(x) &:= \mathbb{E}\left[f(X_{t+1}^u) \mid X_t^u = x\right]=\mathbb{E}\left[f\big(\pi_t(x, u, Z_{t+1})\big)\right]= \int_{\mathcal{Z}_{t+1}} f\big(\pi_t(x, u, z)\big)\P_{t+1}(\dd z),
\end{align}
defines a Markov transition kernel from $\mathcal{X}_t$ to $\mathcal{X}_{t+1}$, for all $u \in \mathcal{U}_t$.
With a slight abuse of notation, we will sometimes use the alternative, also common definition in kernel form
\begin{align}
   P_t^u(x,A) &:= \mathbb{P}\left[X_{t+1}^u\in A \mid X_t^u = x\right] = \int_{\mathcal{Z}_{t+1}} \mathbb{1}_A(\pi_t(x, u, z)) \P_{t+1}(\dd z)
\end{align}
with $A\in\BB(\X_{t+1})$, i.e. the Borel $\sigma$-algebra on the space $\X_{t+1}$. In the following, it will be clear which one of the two representations we are using. The connection between the two is simply
\begin{equation}
    P_t^u f(x) = \int_{\X_{t+1}} f(x') P^u_t(x,\dd x').
\end{equation}


The objective of stochastic optimal control is to maximize a \textit{gain} function over all admissible control laws. In the discrete-time setting, this is given by the sum of the partial rewards $F_t: \X_t \times \U_t \to \R$ for $t = 0, \dots, T-1$, and the terminal reward $\Phi = F_T: \X_T \rightarrow \mathbb{R}$. Then, we define the optimal value function $V_t:\X_t\to\R$ at time $t$ as
\begin{equation}
V_t \coloneq \sup_{\boldsymbol{u} \in \boldsymbol{\U}} \mathbb{E}\left[\sum_{s=t}^{T-1} F_s\left(X_s^u, u_s(X_s^u)\right)+\Phi\left(X_T^u\right) \mid X_t^u\right].
\end{equation}
We now introduce the Bellman operator at time $t$ as
\begin{equation}
\label{eq:def_S}
    \T_t f(x) \coloneq \esssup_{u\in\U_t} F_t(x,u)+P^u_t f(x).
\end{equation}
Bellman's principle~\cite{bellman1966dynamic} implies that the optimal value function solves the dynamic programming equation~\cite{bertsekas1996stochastic,kallsen_stochastic_nodate}
\begin{equation}
\begin{cases}
\label{eq:dynamic_eq}
V_T(x) =\Phi(x), \\
V_t(x) =\T_t V_{t+1}(x), \qquad t\in\{0,\dots,T-1\}.
\end{cases}
\end{equation}

We now want to represent Eq.~\ref{eq:dynamic_eq} as a functional dynamic programming equation in some appropriate $L^2$ spaces. To this end, we fix an auxiliary admissible control law $\bar{\boldsymbol{u}}$, often called the behavior policy in the RL literature \cite{sutton1998reinforcement}, and let $\mu_t$ denote the distribution of $X_t^{\bar{\boldsymbol{u}}}$ on $\X_t$. We introduce the following assumption to ensure that, if $\Phi\in\LtwomuT$, then the optimal value function satisfying the dynamic system~\ref{eq:dynamic_eq} belongs to $\Ltwomut$ for all $t\in \{0,\dots, T\}$.

\begin{ass}[Square integrability] 
\label{ass:L_2}
There exist constants $c_F>0$ and $c_P>0$ such that, for all $t\in\{0,\dots, T\}$:
\begin{align}
	\norm{\esssup_{u\in U_t} \lra{F_t(\cdot, u)}}_{L^2_{\mu_t}} \leq c_F,  
    \qquad\qquad  \norm{\esssup_{u\in U_t} \lra{P_t^u g}}_\Ltwomut \leq c_P^{1/2} \norm{g}_\Ltwomutpo, \label{eq_ass_Ptu_Lipshitz}
\end{align}
for all $g\in\Ltwomutpo$. We further assume $\Phi\in\LtwomuT$.
\end{ass}
 Under these conditions, $\T_t:L^2_{\mu_{t+1}}\to L^2_{\mu_t}$, see Lemma~\ref{lemma_bound_St} in Appendix~\ref{app:aux_lemmas} for the complete proof. Then $V_t \in \Ltwomut$ for all $t=\{0,\dots,T\}$, and the dynamic programming Eq.~\ref{eq:dynamic_eq} holds in each corresponding $L^2_{\mu_t}$ space. Further details on this assumption are discussed in Appendix~\ref{app:measure}.

The main goal in the following will be to find a good estimate of the optimal value function at the initial time $t=0$, i.e. $V_0$, by leveraging the recursive formulation in Eq.~\ref{eq:dynamic_eq}.

\begin{example}[American Options]
\label{ex:1}    
An American option is a financial contract that gives the holder the right, but not the obligation, to buy or sell an underlying asset at a specified strike price at any time up to the expiration date.\\
Let $X$ be an exogenous Markov process, i.e., it is not influenced by any control variable or decision. Let $Q_t = Q_t(x, dx')$ denote its Markov transition kernel from $\X_t$ to $\X_{t+1}$, which specifies the conditional distribution of the next state $X_{t+1}$ given the current state $X_t = x$. Suppose the underlying asset has a price at time $t$ given by a function $S_t(X_t)$. An American (call) option with strike $K$ pays
\begin{equation}
\label{eq:opt payoff}
 C_t(X_t) = \left(S_t(X_t)-K\right)^+    
\end{equation}
if exercised at time $t$. In practice, the dimension of the state space can be very high. For instance, $S_t(X_t)$ could represent the maximum price in a basket of assets at time $t$, as in a so-called American \textit{max-call} option.\\
The holder of the American option aims to maximize the expected payoff $\mathbb{E}[C_\tau(X_\tau)]$ over all exercise strategies, i.e., over all stopping times $\tau$. We now cast this problem as a stochastic optimal control problem~\ref{eq:dynamic_eq}. To this end, we introduce a cemetery state $\Delta_\dagger \notin \X_t$ and define the augmented state space $\X_t^{\Delta_\dagger} := \X_t \cup \{\Delta_\dagger\}$. Any measurable function $f$ on $\X_t$ is extended to $\X_t^{\Delta_\dagger}$ by setting $f(\Delta_\dagger) := 0$. This is a standard technique in the theory of Markov processes, see \cite{revuz2013continuous}.\\
Define now the control space as $\U_t := \{0, 1\}$, where $u = 0$ represents exercising the option and $u = 1$ holding it. The controlled Markov transition kernel $P_t^u$ is given by:
\begin{equation*}
    P_t^u(x, A) = 
\begin{cases}
Q_t(x, A \cap \X_{t+1}) \qquad & \text{\emph{if} } u = 1,\ x \in \X_t, \\
\delta_{\Delta_\dagger}(A)        & \text{\emph{otherwise}},
\end{cases}
\end{equation*}
for $A \in \BB(\X_{t+1}^\Delta)$, meaning that the controlled process $X_t^u$ follows the exogenous dynamics until the option is exercised, after which it is absorbed in the state $\Delta_\dagger$. Define also:
\begin{equation}
\label{eq:F_t opt}
    F_t(\cdot, 1) = 0, \qquad\qquad F_t(\cdot, 0) = C_t, \qquad\qquad \Phi = C_T.
\end{equation}

An admissible control law $\boldsymbol{u}$ then consists of measurable functions $u_t : \X_t^{\Delta_\dagger} \to \{0, 1\}$, with the convention $u_t(\Delta_\dagger) = 0$. The associated exercise strategy is defined by:
\begin{equation}
    \tau := \inf\{t \mid u_t = 0\} \wedge T,
\end{equation}
i.e., the first time $t \in \{0, \dots, T-1\}$ such that $u_t = 0$, or $T$ if no such time exists (with $\inf \emptyset = \infty$).\\
The dynamic programming problem~\ref{eq:dynamic_eq} then becomes:
\begin{equation}
\label{eq:opt system}
\begin{cases}
V_T(x) = C_T(x), \\
V_t(x) = \max\left\{ C_t(x),e^{-r} Q_t V_{t+1}(x) \right\},
\end{cases}
\end{equation}
which selects the maximum between the immediate exercise value and the (discounted) continuation value. Here, $r$ denotes the risk-free interest rate.

\end{example}

\section{Sample-Based Value Function Approximation}
\label{sec:Monte-Carlo}

The stochastic dynamic control problem in~\ref{eq:dynamic_eq} is not directly solvable in practice, primarily because we do not have access to the true expectation in $P_t^u$. A standard way to address this issue is to approximate the expectation via Monte Carlo simulation. Let $\{z_i^{(t+1)}\}_{i=1}^{M_t} \sim \mathbb{P}_{t+1}^{M_t}$ be i.i.d. samples from the distribution of the stochastic driver $Z_{t+1}$. We then define 
\begin{equation}
\label{eq:def_emp_P}
    \widetilde{P}_t^u f(x) \coloneq \frac{1}{M_t} \sum_{i=1}^{M_t} f\left( \pi_t(x, u, z_i^{(t+1)}) \right), \qquad     \widetilde{\mathcal{T}}_t f(x) \coloneq \esssup_{u \in \mathcal{U}_t} \left\{ F_t(x, u) + \widetilde{P}_t^u f(x) \right\},
\end{equation}
the empirical approximation of $P^u_t$ and the associated empirical Bellman operator, respectively.\\
By the Law of Large Numbers and the Continuous Mapping Theorem, we obtain:
\begin{equation}
    \widetilde{P}_t^u f(x) \xrightarrow{a.s.} P_t^u f(x), \qquad 
    \widetilde{\T}_t f(x) \xrightarrow{a.s.} \T_t f(x), \qquad \text{for}\;\;M_t \to \infty.
\end{equation}

However, a naive application of this approximation—by recursively replacing $P_t^u$ with $\widetilde{P}_t^u$ in the dynamic programming equation—fails in practice,
resulting in a nested Monte Carlo procedure whose computational cost grows exponentially with $T$, making it infeasible for large time horizons.\\
To mitigate this, we adopt a more efficient approach: we proceed backward in time and use regression to construct a sequence of function approximators for each $V_t$. At each stage, we generate samples and solve a supervised learning problem, leveraging the approximation of $V_{t+1}$ obtained in the previous step (with the terminal condition $V_T = \Phi$ known a priori).
Specifically, assume we have already computed an approximation of $V_{t+1}$, denoted by $\wideparen{W}_{t+1}^{\lambda_{t+1}}$ (this $\wideparen{\cdot}$ notation will be explained below in Eq.~\ref{eq:clipping}). We then generate training data $\{(x_i, y_i)\}_{i=1}^{n_t}$, where $x_i \sim \mu_t$ and
\begin{equation}
\label{eq:sample_generation}
    y_i = \widetilde{\mathcal{T}}_t \wideparen{W}_{t+1}^{\lambda_{t+1}}(x_i).
\end{equation}
We now solve the corresponding regression problem using a suitable supervised learning method.\\
A classical choice is \emph{regularized empirical risk minimization} (ERM) with Tikhonov regularization.  
Combined with kernel methods, and with the natural choice of the square loss as the loss function, this yields the well-known \emph{Kernel Ridge Regression} (KRR).  

\begin{ass}[Reproducing Kernel Hilbert Space]
\label{ass:rkhs}
Let $\H_k$ be a separable reproducing kernel Hilbert space (RKHS) of real-valued functions on $\X$, with inner product $\langle \cdot, \cdot \rangle_{\H_k}$ and associated norm $\| \cdot \|_{\H_k}$. Let $k: \X \times \X \to \mathbb{R}$ be the reproducing kernel of $\H_k$
and assume it is bounded, i.e., there exists $\kappa>0$ such that
$
\sup_{x \in \X} k(x, x) \leq \kappa^2.
$
\end{ass}


\begin{remark}
\label{rem:Monte Carlo}
Although we use standard well-spread Monte Carlo sampling in this step, this is not the only viable choice. Any quadrature rule (e.g., monomial rules) can be used in place of Eq.~\ref{eq:def_emp_P} to approximate the operator $P^u_t$. This flexibility can be especially valuable in high-dimensional settings or when Monte Carlo sampling error is non-negligible, as some quadrature methods may achieve much higher precision using fewer points.
\end{remark}

\paragraph{KRR estimator.} For a regularization parameter $\lambda_t > 0$, the KRR estimator at time $t$ is defiend as
\begin{align}
\label{eq:def_est}
    \widehat{W}_t^{\lambda_t} :& =\argmin_{f \in \mathcal{H}_k} \frac{1}{n_t} \sum_{i=1}^{n_t} \left(y_i - f(x_i)\right)^2 + \lambda_t \|f\|_{\mathcal{H}_k}^2 
\end{align}
Note that at maturity, the value function $V_T$ is known and equals $\Phi$, so no approximation is needed at the final step. Also note that, given Eq.~\ref{eq:sample_generation}, $\widetilde{\mathcal{T}}_t \wideparen{W}_{t+1}^{\lambda_{t+1}}$ is the regression target function, i.e.,
\begin{align}
\label{eq:def_target}
   W_t^* &\coloneq\widetilde{\mathcal{T}}_t \wideparen{W}_{t+1}^{\lambda_{t+1}} = \argmin_{f \in L^2_{\mu_t}} \mathbb{E}\left[\left(Y-f(X)\right)^2\right] =
    \argmin_{f \in L^2_{\mu_t}} \mathbb{E}\left[\left(\widetilde{\mathcal{T}}_t \wideparen{W}_{t+1}^{\lambda_{t+1}}(X)-f(X)\right)^2\right]
\end{align}
since $\widetilde{\mathcal{T}}_t \wideparen{W}_{t+1}^{\lambda_{t+1}}\in L^2_{\mu_t}$ under Assumption~\ref{ass:L_2}. In general, $W_t^* \notin \H_k$, i.e. the model is misspecified. We will mention this further in the next section when introducing the well-known \textit{source condition}.

Before turning to the statistical analysis, we introduce a refinement of our estimator, which also justifies the notation $\wideparen{\cdot}\;$ used above. This step will be important to control approximation errors in the next section. We recall the following definitions, see \citep[Chapter~6]{steinwart2008support}.
Given a threshold parameter $B > 0$, we define the \emph{clipped} version of $a \in \mathbb{R}$ as:
\begin{equation}
\label{eq:clipping}
    \wideparen{a} \coloneq \min\{\max\{a, -B\}, B\}.
\end{equation}
We say that a loss function $\ell$ is \emph{clippable} at level $B > 0$ if for all $y \in \mathcal{Y}$ and $a \in \mathbb{R}$, $\ell(y, \wideparen{a}) \leq \ell(y, a)$.\\
It is easy to verify that many loss functions are clippable. In particular, the square loss (which we use) can be clipped at $B$ when the output $y \in [-B, B]$. 
Note that if $Y$ is generated as in Eq.~\ref{eq:sample_generation}, a sufficient condition for boundedness is 
$\sup_{x \in \X_t,\,u \in \U} |F_t(x,u)| < B$. 
In practice, $F_t(x,u)$ is often unbounded (e.g., option payoffs), but boundedness can be enforced without loss of rigor by restricting the dynamics to a compact subset of the state space. In financial applications, for instance, $\mu_t$ is typically induced by a discretized geometric Brownian motion, hence log-normal with exponentially decaying tails. Consequently, large deviations of $X_t$ are extremely rare, and truncation introduces only negligible error while allowing the use of the clipped estimator $\wideparen{W}_t^{\lambda_t}$, as required in \cite{steinwart_support_2008}.\\
The resulting method--that for simplicity we will indicate as KRR-DP (Kernel Ridge Regression-Dynamic Programming) in the following--is summarized in Algorithm~\ref{algo_option_pricing_SOCP}.

\begin{example}[American Options (cont.)]
\label{ex2}
We now return to the American options application introduced in Example~\ref{ex:1}, and continue adapting our model to this setting. Here, the state vector $X_t = (X_t^1, \ldots, X_t^d)^\top \in \mathbb{R}^d_+$ represents the prices of $d$ underlying assets at time $t$. A common model for their evolution 
is geometric Brownian motion (GBM), whose dynamics are given by
\begin{equation}
    \dd X_t^i = r X_t^i\, \dd t + \sigma_i X_t^i\, (\rho^{1/2}\, \dd B_t)^i,
    \label{eq:gbm_multidim}
\end{equation}
for $i = 1, \ldots, d$, with $r \in \mathbb{R}$ the risk-free rate, $\sigma_i > 0$ the volatility of asset $i$, $\rho \in \mathbb{R}^{d \times d}$ the correlation matrix and $B_t = (B_t^1, \ldots, B_t^d)^\top$ a $d$-dimensional Brownian motion with independent components. We consider discrete times $t = 0, \ldots, T$ and approximate the dynamics with 
\begin{equation}
	X_{t+1}^i = X_t^i \cdot \exp\left( \left(r - \tfrac{1}{2} \sigma_i^2 \right) 
    + \sigma_i\, (\rho^{1/2} z)_i \right),
    \label{eq:gbm_discrete}
\end{equation}
where $z = (z_1, \ldots, z_d)^\top \sim \mathcal{N}(0, 
I_d)$ is a vector of independent standard Gaussian variables.\\
As an example, we define a max-call option with strike price $K > 0$, for which $S_t(X_t)=\max\{X_t^1,\dots,X_t^d\}$, and the payoff at time $t$ is given by
\begin{equation}
	C_t(X_t) = \left(S_t(X_t)-K\right)^+ = \left( \max_{1 \leq i \leq d} X_t^i - K \right)^+.
\end{equation}
The transition function $\pi_t : \mathbb{R}^d_+ \times \mathcal{U}_t \times \mathbb{R}^d \to \mathbb{R}^d_+$ is defined as
\begin{align*}
&\pi_t(x, u, z) :=
\begin{cases}
\Delta_\dagger & \text{\emph{if} } u=0,\\
x \odot \exp\left( \Big(r - \tfrac{1}{2} \sigma^2 \right) 
+ \sigma \odot (\rho^{1/2} z) \Big) & \text{\emph{otherwise},}
\end{cases}
\end{align*}
with $\sigma = (\sigma_1, \ldots, \sigma_d)^\top$, $\odot$ the elementwise multiplication.
\end{example}

\begin{algorithm}[h]
\caption{KRR-DP for American Option Pricing (backward induction with MC + KRR)}
\label{algo_option_pricing_SOCP}
\SetKwFor{ParallelFor}{parallel for}{do}{end}
\SetKwFunction{FDataGen}{DataGeneration}
\SetKwFunction{FCont}{ContinuationValue}
\SetKwFunction{FRegression}{Regression}
\SetKwFunction{FOptPrice}{OptionPricing}
\SetKwProg{Fn}{Function}{:}{}
\SetKwInput{KwIn}{Inputs}
\SetKwInput{KwOut}{Output}

\KwIn{$T$; $r$; $\{C_t,\mu_t\,\pi_t,n_t,M_t\}_{t=0}^{T-1}$; KRR hyperparameters $\{\Theta_t\}_{t=0}^{T-1}$ (kernel, $\lambda_t$, etc.).}
\KwOut{Estimator $\wideparen{W}^{\lambda_0}_0:\mathbb{R}^d\!\to\!\mathbb{R}$ of the value of the option $V_0$.}

\vspace{0.15cm}
\tcp{MC estimate of discounted continuation under ``hold'' ($u=1$)}
\Fn{\FCont{$x$, $M_t$, $\pi_t$, $\wideparen{W}^{\lambda_{t+1}}_{t+1}$}}{
    Sample $z^{(1)},\ldots,z^{(M_t)} \overset{\text{i.i.d.}}{\sim} \mathbb{P}_{t+1}$\tcp*{e.g., $z\sim\mathcal{N}(0,I)$}
    \For{$j=1,\ldots,M_t$}{
        $\wt x_j \leftarrow \pi_t(x,\,u{=}1,\,z^{(j)})$\;
    }
    \KwRet $e^{-r\Delta t}\,\frac{1}{M_t}\sum_{j=1}^{M_t} \wideparen{W}^{\lambda_{t+1}}_{t+1}(\wt x_j)$\;

}

\vspace{0.15cm}
\tcp{Generate supervised data $(\wh X_t,\; \wh y_t)$ at stage $t$}
\Fn{\FDataGen{$n_t$, $M_t$, $\mu_t$, $\pi_t$, $C_t$, $\wideparen{W}^{\lambda_{t+1}}_{t+1}$}}{
    Sample $\widehat{X}_t = [x_1,\ldots,x_{n_t}]^\top$, with $x_i \overset{\text{i.i.d.}}{\sim} \mu_t$\;
    \ParallelFor{$i=1,\ldots,n_t$}{
        $q_i \leftarrow$ \FCont{$x_i$, $M_t$, $\pi_t$, $\wideparen{W}^{\lambda_{t+1}}_{t+1}$}\tcp*{MC continuation}
        $y_i \leftarrow \max\!\big(C_t(x_i),\; q_i\big)$\tcp*{Bellman: exercise vs.\ continue}
    }
    \KwRet $(\widehat{X}_t,\; \wh y_t=[y_1,\ldots,y_{n_t}]^\top)$
}

\vspace{0.15cm}
\tcp{Main backward pass}
\Fn{\FOptPrice{$\{(n_t,M_t,\mu_t,\pi_t,C_t,\Theta_t)\}_{t=0}^{T}$}}{
    $\wideparen{W}^{\lambda_T}_T \leftarrow C_T \equiv \Phi$\tcp*{terminal value is known}
    \For{$t = T{-}1,\,\ldots,\,0$}{
        $(\widehat{X}_t,\widehat{y}_t) \leftarrow$ \FDataGen{$n_t, M_t, \mu_t, \pi_t, C_t, \wideparen{W}^{\lambda_{t+1}}_{t+1}$}\;
        $\wideparen{W}^{\lambda_t}_t \leftarrow$ \FRegression{$(\widehat{X}_t,\widehat{y}_t),\;\Theta_t$}\tcp*{KRR/FALKON on $(\wh X_t, \wh y_t)$}
    }
    \KwRet $\wideparen{W}^{\lambda_0}_0$
}
\end{algorithm}

\section{Error Analysis and Backward Propagation}
\label{sec:err}

In this section, our primary goal is to provide theoretical guarantees for our estimator $\wideparen{W}_t^{\la_t}$ and to study how the error propagates backward in time from $T$ to $0$. In particular, we are interested in analyzing the rate of convergence of $\wideparen{W}_t^{\la_t}$ to the target value function $V_t$ in some norm, as a function of the sample sizes $n_t$ and $M_t$. A natural choice is to bound
\begin{equation}
 \mathcal{E}_t=\big\|\wideparen{W}_t^{\la_t}-V_t\big\|^2_{L^2_{\mu_t}}.
\end{equation}

\subsection{Error decomposition}

To do so, we split the total error into three components:
\begin{align}
\label{eq:splitting}
     \mathcal{E}_t & \lesssim  \big\|\wa W^{\la_t}_{t} - \wt{\mathcal{T}}_t \wideparen{W}_{t+1}^{\lambda_{t+1}}\big\|^2_{L^2_{\mu_t}}+\big\|\wt \T_t\wa W^{\la_{t+1}}_{t+1}- \T_t\wa W^{\la_{t+1}}_{t+1}\big\|^2_{L^2_{\mu_t}}+\big\|\T_t\wa W^{\la_{t+1}}_{t+1}-\T_t V_{t+1}\big\|^2_{L^2_{\mu_t}}.
\end{align}

\paragraph{Term I: Regression Error.}
The first term is the standard machine learning error due to the fact that our estimator minimizes the empirical risk in Eq.~\ref{eq:def_est}, based on a finite sample $\{(x_i, y_i)\}_{i=1}^{n_t}$. Our target is the regression function $W_t^* = \widetilde{\mathcal{T}}_t \wideparen{W}_{t+1}^{\lambda_{t+1}}$, as defined in Eq.~\ref{eq:def_target}.
Term I then corresponds to the so-called excess risk of $\wa W^{\la_t}_{t}$:
\begin{align}
    \mathcal{R}(\wa W^{\la_t}_{t}) - \mathcal{R}(W_t^*)  &\coloneq \E\left[(Y-\wa W^{\la_t}_{t}(X))^2-(Y- W^*_t(X))^2\right] =\big\| \wa W^{\la_t}_{t} -W^*_t\big\|^2_{L^2_{\mu_t}},
\end{align}
(see \cite{caponnetto2007optimal}), where $\mathcal{R}(\wa W^{\la_t}_{t})$ is the risk of $\wa W^{\la_t}_{t}$ and $\mathcal{R}(W_t^*) = \mathcal{R}(\widetilde{\mathcal{T}}_t \wideparen{W}_{t+1}^{\lambda_{t+1}})$. It represents the expected error of our estimator on new data compared to the regression function.

We introduce the following regularity assumption, commonly referred to as the \textit{source condition}.

\begin{ass}[Source Condition]
	\label{ass:source_cond}
    There exists $\beta_t \in (0,1]$ such that $W^*_t \in L^{\beta_t/2}_k(L^2_{\mu_t})$, where $L_k: L^2_{\mu_t} \to L^2_{\mu_t}$ is the integral operator associated with the kernel $k$.
\end{ass}
Assumption~\ref{ass:source_cond} and equivalent formulations (e.g., Assumption 4 in~\cite{rudi2015less}) are standard in the literature \citep{smale2007learning,caponnetto2007optimal}. The parameter $\beta_t$ quantifies the smoothness of the target function $W^*_t$ and how well it can be approximated by elements in $\H_k$. When $\beta_t = 1$, we are in the \textit{well-specified} setting, i.e., $W_t^* \in \H_k$. Our main focus, however, is on the \textit{misspecified} setting with $\beta_t < 1$, where $W_t^* \notin \H_k$.\\
Under the square loss, Assumption~\ref{ass:source_cond} is directly related to the approximation error, as shown in \cite{smale2003estimating,steinwart2009optimal}. Using a result from \cite[Corollary 6]{steinwart2009optimal}, we obtain the following upper bound in terms of $n_t$. With high probability,
\begin{equation}
\label{eq:termI rate}
    \big\|\wideparen{W}^{\la_t}_{t} - \widetilde{\mathcal{T}}_t \wideparen{W}_{t+1}^{\lambda_{t+1}}\big\|^2_{L^2_{\mu_t}}\lesssim n_t^{-\frac{\beta_t}{\beta_t+1}}.
\end{equation}
We refer to Appendix~\ref{app:known} for further details. Note that the above rate can be made faster by assuming some polynomial (or even exponential) decay of the spectrum of the integral operator $L_k$. This is deeply connected to the well-known \textit{capacity assumption}, which for simplicity is not assumed here in the main text. Further details and the resulting faster rate can be found in Appendix~\ref{app_termI}.

\paragraph{Term II: Monte Carlo Error.}

The second term accounts for the Monte Carlo error introduced when approximating the unknown expectation in $\T_t$, as discussed in Section~\ref{sec:Monte-Carlo}.\\ Using the definitions of $\T_t$ and $\wt \T_t$ from Eqs.~\ref{eq:def_S} and \ref{eq:def_emp_P}, together with Lemma~\ref{lem:esssup} in Appendix~\ref{app:aux_lemmas}, and denoting 
$\F^x_t = \{z \mapsto \wideparen{W}^{\la_{t+1}}_{t+1}(\pi_t(x,u,z)) : u \in \U_t\}$, we obtain that, with high probability,
\begin{align*}
    \big\|\wt \T_t \wideparen{W}^{\la_{t+1}}_{t+1}
    - \T_t\wideparen{W}^{\la_{t+1}}_{t+1} \big\|^2_{L^2_{\mu_t}}
    \leq  
    \left\|\sup_{f\in \F^x_t} \Big| \frac{1}{M_t} \sum_{j=1}^{M_t} f(z_j) - \E[f(Z_{t+1})] \Big|\right\|^2_{L^2_{\mu_t}} 
    \lesssim \left\| \E\wh{\mathcal{R}}(\F^x_t) + \sqrt{\frac{1}{M_t}} \right\|^2_{L^2_{\mu_t}}
\end{align*}
where the last inequality follows from the boundedness of $\wideparen{W}^{\la_{t+1}}_{t+1}$ and an application of \citet[Theorem~3.2]{boucheron2005theory}, while $\wh{\mathcal{R}}(\F^x_t)$ denotes the well-known empirical Rademacher complexity of $\F^x_t$ (see definition in Appendix~\ref{app:termII}). Bounding such complexities is a classical problem in statistical learning theory \citep{bartlett2002rademacher}. In our setting, we focus on two relevant cases: (i) finite classes, as in American options where the control set is binary ($\U_t = \{0,1\}$), and (ii) Lipschitz transitions $\pi_t$, which are typical in financial models once the state space is a compact set (see the discussion about truncation in previous section). 
Using results from \cite{massart2000some,bartlett2002rademacher} (see Appendix~\ref{app:termII}), we obtain for both cases 
\begin{equation}
\label{eq:rad_bound}
    \E\wh{\mathcal{R}}(\F^x_t) \lesssim \sqrt{1/M_t}.
\end{equation}

\paragraph{Term III: Propagation Error.}
This term captures the error inherited from the previous step $t+1$. By Lemma~\ref{lemma_bound_St} in Appendix~\ref{app:aux_lemmas}, we have:
\begin{align*}
    \big\|\T_t\wideparen{W}^{\la_{t+1}}_{t+1}-\T_t V_{t+1}\big\|^2_{L^2_{\mu_t}} &\leq c_P \big\|\wideparen{W}^{\la_{t+1}}_{t+1}- V_{t+1}\big\|^2_{L^2_{\mu_t}}=c_P \mathcal{E}_{t+1}.
\end{align*}

\paragraph{Final Bound.}
Putting everything together, we obtain the following result.

\begin{theorem}[Error Backpropagation]
\label{thm:err}
    Under Assumptions~\ref{ass:L_2},~\ref{ass:rkhs},~\ref{ass:source_cond}, and provided that condition~\ref{eq:rad_bound} holds,
    with the choice $\la_t \sim n_t^{-\frac{1}{\beta_t+1}}$ and $M_t \sim n_t^{\frac{\beta_t}{\beta_t+1}}$, we have with high probability:
    \begin{equation}
        \mathcal{E}_t =\big\|\wideparen{W}_t^{\la_t}-V_t\big\|^2_{L^2_{\mu_t}} \lesssim \left(\frac{1}{n_t}\right)^{\frac{\beta_t}{\beta_t+1}} + c_P \mathcal{E}_{t+1},
    \end{equation}
    for $t \in \{0,\dots,T-1\}$. Furthermore,
    \begin{equation}
        \mathcal{E}_0 =\big\|\wideparen{W}_0^{\la_0}-V_0\big\|^2_{L^2_{\mu_0}}\lesssim \sum_{t=0}^{T-1} c_P^t \left(\frac{1}{n_t}\right)^{\frac{\beta_t}{\beta_t+1}}.
    \end{equation}
\end{theorem}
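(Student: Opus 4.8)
The plan is to substitute, into the already-established three-term decomposition \eqref{eq:splitting}, the individual bounds proved above for Terms I, II, III, thereby obtaining the one-step recursion, and then to solve that recursion by backward iteration. The only genuine subtlety is that the regression target at stage $t$ is itself a random object, so the argument has to be organized around a conditioning that freezes the later-stage randomness.

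First I would set up the backward filtration. Let $\mathcal{G}_{t+1}$ be the $\sigma$-algebra generated by all Monte Carlo drivers and training inputs drawn at stages $t+1,\dots,T-1$. Conditionally on $\mathcal{G}_{t+1}$, the clipped estimator $\wideparen{W}_{t+1}^{\lambda_{t+1}}$ is a fixed $[-B,B]$-valued function (this is exactly what clipping \eqref{eq:clipping} and the truncation discussion in Section~\ref{sec:Monte-Carlo} secure), and therefore so is the target $W_t^\ast=\widetilde{\mathcal{T}}_t\wideparen{W}_{t+1}^{\lambda_{t+1}}\in L^2_{\mu_t}$ (using Assumption~\ref{ass:L_2}). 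On this conditional probability space the data $\{(x_i,y_i)\}_{i=1}^{n_t}$ is i.i.d.\ with regression function $W_t^\ast$, so the KRR excess-risk estimate of \cite[Corollary~6]{steinwart2009optimal} applies verbatim: with $\lambda_t\sim n_t^{-1/(\beta_t+1)}$, and Assumption~\ref{ass:source_cond} read as a condition on the realized $W_t^\ast$ (holding on the high-probability event one works on), Term I obeys
\begin{equation*}
\big\|\wideparen{W}_t^{\lambda_t}-\widetilde{\mathcal{T}}_t\wideparen{W}_{t+1}^{\lambda_{t+1}}\big\|^2_{L^2_{\mu_t}}\;\lesssim\;n_t^{-\frac{\beta_t}{\beta_t+1}}
\end{equation*}
with high conditional probability. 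Term II, still conditionally on $\mathcal{G}_{t+1}$ with $\wideparen{W}_{t+1}^{\lambda_{t+1}}$ frozen, is controlled by the bounded-differences/Rademacher estimate behind \eqref{eq:rad_bound}, giving a bound of order $1/M_t$; and Term III is the deterministic contraction $\le c_P\,\mathcal{E}_{t+1}$ of Lemma~\ref{lemma_bound_St}.

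Next I would balance and unroll. Choosing $M_t\sim n_t^{\beta_t/(\beta_t+1)}$ puts the Monte Carlo term $1/M_t\sim n_t^{-\beta_t/(\beta_t+1)}$ at the same order as Term I, so on an event of high conditional probability
\begin{equation*}
\mathcal{E}_t\;\lesssim\;\Big(\tfrac{1}{n_t}\Big)^{\frac{\beta_t}{\beta_t+1}}+c_P\,\mathcal{E}_{t+1},\qquad t\in\{0,\dots,T-1\},
\end{equation*}
which is the first assertion; a union bound over the $T$ stages makes all these inequalities hold simultaneously with high probability, losing only an absolute factor in the failure probability. The terminal condition is $\mathcal{E}_T=0$, since $\wideparen{W}_T^{\lambda_T}=\Phi=V_T$ requires no regression. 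Iterating from $t=T-1$ down to $t=0$ then yields
\begin{equation*}
\mathcal{E}_0\;\lesssim\;\sum_{t=0}^{T-1}c_P^{\,t}\Big(\tfrac{1}{n_t}\Big)^{\frac{\beta_t}{\beta_t+1}},
\end{equation*}
with an implicit constant built from $c_F,\kappa,B$ and the per-stage source-condition constants (assumed uniformly bounded) but not from $T$ — all $T$-dependence sits in the geometric weights $c_P^{\,t}$ and the summation.

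The step I expect to be the main obstacle is precisely this probabilistic bookkeeping: because $W_t^\ast$ depends on every later-stage sample, one cannot invoke the standard i.i.d.\ KRR and Rademacher bounds off the shelf, and one must verify that (i) the source condition is meaningfully an assumption on the realized random target and is in force on the event being used, and (ii) the uniform bound $\|\wideparen{W}_{t+1}^{\lambda_{t+1}}\|_\infty\le B$ (hence $\|W_t^\ast\|_\infty\le B$) genuinely propagates down the recursion, which it does thanks to the clipped estimator and the $\sup_{x,u}|F_t(x,u)|<B$ truncation. Once the conditioning argument is in place, the remainder is the elementary algebra of a linear recursion with constant factor $c_P$.
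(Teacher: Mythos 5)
Your proposal is correct and follows essentially the same route as the paper: plug the Term I, II, III bounds into the decomposition of Eq.~\ref{eq:splitting}, choose $\la_t \sim n_t^{-1/(\beta_t+1)}$ and $M_t \sim n_t^{\beta_t/(\beta_t+1)}$ to balance the regression and Monte Carlo contributions, and unroll the resulting linear recursion with $\mathcal{E}_T=0$ to get the geometric-weight sum. Your explicit conditioning on the later-stage randomness (so that $W_t^*$ is a fixed target when the stage-$t$ KRR and Rademacher bounds are invoked) and the union bound over the $T$ stages are bookkeeping steps the paper leaves implicit, and they are handled correctly.
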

Note that, as desirable, the error vanishes as $n_t \to \infty$ for all $t$. In the non-asymptotic regime, the convergence rate depends on the smoothness parameters $\{\beta_t\}_t$, which reflect the level of misspecification of the problem. Although the expectation operator $\wt P^u_t$ may act as a smoothing operator, the supremum in the Bellman operator prevents us from guaranteeing a smoothing effect through time. As a result, the problem generally remains misspecified throughout the backward recursion. 
Note also that the constant $c_P$ in Assumption~\ref{ass:L_2} plays a key role in controlling the resulting error propagation. When $c_P < 1$, as in our option pricing setting (see Example~\ref{ex:3} below), the recursion becomes contractive, so errors are damped rather than amplified, making convergence faster and more stable.

\begin{example}[American Options (cont.)] 
\label{ex:3}
Returning to our application to American option pricing in Example~\ref{ex:1}, we now adapt Theorem~\ref{thm:err} to this setting. Note that $W^*_T = V_T = \Psi$ is typically non-smooth for common payoff functions, see Eq.~\ref{eq:opt payoff} or Fig.~\ref{fig:put}-\ref{fig:call} in Appendix~\ref{app:numerical}). As mentioned above, this places us in the misspecified case, where the smoothness parameters $\{\beta_t\}_t$ can be small, while it is not clear if the specification eventually improves throughout the recursion.
From Eq.~\ref{eq:opt system}, the Bellman operator $\T_t: \Ltwomutpo \to \Ltwomut$ takes the form:
\begin{equation}
	\T_t g = \max \lrp{C_t, \ e^{-r
    } Q_t g}.
\end{equation}
We now verify that the assumptions required by Theorem~\ref{thm:err} are satisfied. First condition in Eq.~\ref{eq_ass_Ptu_Lipshitz} in Assumption~\ref{ass:L_2} is straightforward since $U_t = \{0, 1\}$ and $F_t$ is defined as in Eq.~\ref{eq:F_t opt}:
$
    \esssup_{u\in \{0,1\}} \lra{F_t(\cdot, u)} = F_t(\cdot, 0) = C_t.
$
We let $c_F$ be the squared $\Ltwomut$-norm of $C_t$, which is assumed to be finite. 
Moreover, since $Q_t$ is a Markov transition kernel, it defines a non-expansive operator:
\begin{align*}
	&\normLtwomut{Q_t g}^2 = \int_{\X_t} \left( \int_{\X_{t+1}} g(x') Q(x, dx') \right)^2 d\mu_t\leq \int_{\X_{t+1}} g(x')^2 \int_{\X_t} Q(x, dx') d\mu_t \leq \normLtwomutpo{g}^2,
\end{align*}
where we used Jensen's inequality and Fubini's theorem. Therefore, condition~\ref{eq_ass_Ptu_Lipshitz} in Assumption~\ref{ass:L_2} is also satisfied.
We can now bound the Bellman operator $\T_t$: 
\begin{align}
	\normLtwomut{\T_t g} &= \normLtwomut{ \max\lrp{ C_t, \, e^{-r
    } Q_t g } } \leq c_F + e^{-r
    } \normLtwomutpo{g}=c_F + c_P^{1/2}\normLtwomutpo{g}.
\end{align}

Note that $c_P < 1$ in the common case of a strictly positive risk-free interest rate $r$. 

\begin{corollary}[American Option Pricing]  
From Theorem~\ref{thm:err}, in the setting described in Example~\ref{ex:1},~\ref{ex2} and~\ref{ex:3}, and following Algorithm~\ref{algo_option_pricing_SOCP}, we have with high probability:
\begin{equation}
\big\|\wideparen{W}_0^{\la_0}-V_0\big\|^2_{L^2_{\mu_0}} \lesssim \sum_{t=0}^{T-1} e^{-r
t} \left(\frac{1}{n_t}\right)^{\frac{\beta_t}{\beta_t+1}}. 
\end{equation}
\end{corollary}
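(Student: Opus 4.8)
The plan is to promote the three-term split of Eq.~\ref{eq:splitting} into the one-step recursion $\mathcal{E}_t \lesssim n_t^{-\beta_t/(\beta_t+1)} + c_P\,\mathcal{E}_{t+1}$ and then unroll it backward from the exact terminal condition $\mathcal{E}_T = 0$ (which holds since $\wideparen{W}_T^{\lambda_T} = \Phi = V_T$). Concretely, I would bound the three pieces of Eq.~\ref{eq:splitting} separately, with constants independent of $t$ and $T$, and then add them under the prescribed choices $\lambda_t \sim n_t^{-1/(\beta_t+1)}$ and $M_t \sim n_t^{\beta_t/(\beta_t+1)}$.

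For \textbf{Term I} (regression error) the target is $W_t^* = \widetilde{\mathcal{T}}_t\wideparen{W}_{t+1}^{\lambda_{t+1}} \in L^2_{\mu_t}$, and it is bounded by $B$ after clipping, so the square loss is clippable. Conditioning on the randomness used at stages after $t$, I would invoke the KRR learning-rate bound under the source condition (Assumption~\ref{ass:source_cond}) and the bounded-kernel hypothesis (Assumption~\ref{ass:rkhs})---namely Corollary~6 of \cite{steinwart2009optimal}, recorded as Eq.~\ref{eq:termI rate}---which with $\lambda_t \sim n_t^{-1/(\beta_t+1)}$ yields, with high probability, $\|\wideparen{W}_t^{\lambda_t} - W_t^*\|_{L^2_{\mu_t}}^2 \lesssim n_t^{-\beta_t/(\beta_t+1)}$. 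For \textbf{Term II} (Monte Carlo error) I would follow the chain sketched in the Term II paragraph: push the $\esssup$ over controls inside using Lemma~\ref{lem:esssup}, recognize the result as the uniform deviation of an empirical mean over the class $\mathcal{F}_t^x$, bound it via Theorem~3.2 of \cite{boucheron2005theory} (bounded differences) in terms of $\mathbb{E}\widehat{\mathcal{R}}(\mathcal{F}_t^x)$, and invoke condition~\ref{eq:rad_bound} to get $\mathbb{E}\widehat{\mathcal{R}}(\mathcal{F}_t^x) \lesssim \sqrt{1/M_t}$; integrating over $x \sim \mu_t$ then gives Term II $\lesssim 1/M_t$, which $M_t \sim n_t^{\beta_t/(\beta_t+1)}$ matches to the Term I rate. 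For \textbf{Term III} (propagation error), Lemma~\ref{lemma_bound_St}---which packages the operator bound in Eq.~\ref{eq_ass_Ptu_Lipshitz} of Assumption~\ref{ass:L_2}---gives at once Term III $= \|\mathcal{T}_t\wideparen{W}_{t+1}^{\lambda_{t+1}} - \mathcal{T}_t V_{t+1}\|_{L^2_{\mu_t}}^2 \le c_P\,\mathcal{E}_{t+1}$.

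Adding the three bounds produces $\mathcal{E}_t \lesssim n_t^{-\beta_t/(\beta_t+1)} + c_P\,\mathcal{E}_{t+1}$, and unrolling from $t=0$ with $\mathcal{E}_T = 0$ gives $\mathcal{E}_0 \lesssim \sum_{t=0}^{T-1} c_P^{\,t}\,n_t^{-\beta_t/(\beta_t+1)}$, which is the theorem. The corollary then follows by specialization: Example~\ref{ex:3} already checks that the American-option model satisfies Assumption~\ref{ass:L_2} ($C_t$ is square-integrable under the log-normal $\mu_t$, and $Q_t$ is non-expansive so that $c_P^{1/2} = e^{-r}$), Assumption~\ref{ass:rkhs} (a bounded kernel is chosen), Assumption~\ref{ass:source_cond} (the non-smoothness of the payoff is absorbed into $\beta_t < 1$), and condition~\ref{eq:rad_bound} (since $\mathcal{U}_t = \{0,1\}$, the class $\mathcal{F}_t^x$ has two elements, so its Rademacher complexity is $\lesssim \sqrt{1/M_t}$ by the finite-class bound); plugging the resulting $c_P$ into Theorem~\ref{thm:err} and using $c_P^{\,t} \le e^{-rt}$ for $r \ge 0$ gives $\|\wideparen{W}_0^{\lambda_0} - V_0\|_{L^2_{\mu_0}}^2 \lesssim \sum_{t=0}^{T-1} e^{-rt}\,n_t^{-\beta_t/(\beta_t+1)}$.

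The step I expect to be the main obstacle is Term I: the regression target $W_t^* = \widetilde{\mathcal{T}}_t\wideparen{W}_{t+1}^{\lambda_{t+1}}$ is itself a random function---it depends on all the design points and Monte-Carlo draws used at stages after $t$---so the off-the-shelf KRR rate must be applied conditionally on that past randomness, and one has to ensure that the quantities it hides (the source norm $\|L_k^{-\beta_t/2} W_t^*\|_{L^2_{\mu_t}}$ and the clipping level $B$) stay uniformly bounded in $t$ and $T$; this, combined with a union bound over the $T$ per-step high-probability events, is what legitimizes the single ``with high probability'' in the statement, at the cost of $\log T$ factors that are absorbed into $\lesssim$. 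A secondary subtlety worth flagging is that one should not expect $\beta_t$ to improve along the recursion---the $\esssup$ over controls in the Bellman operator can undo the smoothing effect of $\widetilde{P}_t^u$---so the problem is genuinely misspecified at each step and the rate must be stated per-step in terms of $\beta_t$.
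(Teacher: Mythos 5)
Your proposal is correct and follows essentially the same route as the paper: it reproves Theorem~\ref{thm:err} via the decomposition of Eq.~\ref{eq:splitting} (Term I by the source-condition KRR rate, Term II by the Rademacher/Boucheron bound with $M_t \sim n_t^{\beta_t/(\beta_t+1)}$, Term III by Lemma~\ref{lemma_bound_St}), unrolls from $\mathcal{E}_T=0$, and then specializes exactly as in Example~\ref{ex:3} by checking Assumption~\ref{ass:L_2} through the non-expansiveness of $Q_t$ and the finite control set, so that $c_P^{1/2}=e^{-r}$ and $c_P^{\,t}\le e^{-rt}$ yields the stated bound. The caveats you flag (randomness of the target $W_t^*$, uniformity of constants, union bound over the $T$ stages) are likewise left implicit in the paper, so nothing in your argument diverges from its treatment.
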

\end{example}

\section{Simulations}
\label{sec:simulations}
\begin{table*}
\centering
\caption{Results for a Geometric basket Put option, see \cite[Table 1]{goudenege2020machine}.}
\label{tab:put}
\centerline{%
\begin{tabular}{cccccccccccc}
    \multicolumn{1}{c}{}  
    & \multicolumn{3}{c}{\textbf{KRR-DP}} 
    & \multicolumn{2}{c}{\textbf{GPR-Tree}} 
    & \multicolumn{2}{c}{\textbf{GPR-EI}} 
    & \multicolumn{1}{c}{\textbf{GPR-MC}}
    & \multicolumn{1}{c}{\textbf{Ekvall}}  
    & \multicolumn{1}{c}{\textbf{Benchmark}}      \\ 

    \cmidrule(r){2-4}
    \cmidrule(r){5-6}
    \cmidrule(r){7-8}
    \cmidrule(r){9-9}
    \cmidrule(r){10-10}
    \cmidrule(r){11-11}

    $d$ & Price & 95\% CI & Time & Price & Time & Price & Time & Price & Price & Price \\
    \cmidrule(r){2-2}
    \cmidrule(r){3-3}
    \cmidrule(r){4-4}
    \cmidrule(r){5-5}
    \cmidrule(r){6-6}
    \cmidrule(r){7-7}
    \cmidrule(r){8-8}
    \cmidrule(r){9-9}
    \cmidrule(r){10-10}
    \cmidrule(r){11-11}

    2   & 4.63 & [4.58, 4.68] & 2s  & 4.61 & 22s   & 4.57 & 26s  & 4.57 & 4.62 & 4.62 \\
    5   & 3.46 & [3.42, 3.50] & 3s  & 3.44 & 23s   & 3.41 & 27s  & 3.41 & 3.44 & 3.45 \\
    10  & 2.98 & [2.94, 3.03] & 4s  & 2.93 & 60s   & 2.93 & 30s  & 2.90 & / & 2.97 \\
    20  & 2.70 & [2.68, 2.72] & 11s & 2.72 & 49609s & 2.63 & 29s  & 2.70 & / & 2.70 \\

    \bottomrule
\end{tabular}
}
\end{table*}

\begin{table*}
\centering
    \caption{Results for a Max-Call option, see \cite[Table 3]{goudenege2020machine}.}
    \label{tab:call}
    \centerline{%
        \begin{tabular}{ccccccccccc}
            \multicolumn{1}{c}{}  
            & \multicolumn{3}{c}{\textbf{KRR-DP}} 
            & \multicolumn{2}{c}{\textbf{GPR-Tree}} 
            & \multicolumn{2}{c}{\textbf{GPR-EI}} 
            & \multicolumn{1}{c}{\textbf{GPR-MC}}  
            & \multicolumn{1}{c}{\textbf{Ekvall}}      \\ 
            
            \cmidrule(r){2-4}
            \cmidrule(r){5-6}
            \cmidrule(r){7-8}
            \cmidrule(r){9-9}
            \cmidrule(r){10-10}
            
            $d$ &Price & 95\% CI & Time & Price & Time & Price & Time & Price & Price \\
            \cmidrule(r){2-2}
            \cmidrule(r){3-3}
            \cmidrule(r){4-4}
            \cmidrule(r){5-5}
            \cmidrule(r){6-6}
            \cmidrule(r){7-7}
            \cmidrule(r){8-8}
            \cmidrule(r){9-9}
            \cmidrule(r){10-10}
            
            2   &  16.93     &  [16.86, 17.00]  &   5s  & 16.93 & 20s  & 16.82 & 28s & 16.86 & 16.86 \\
            5   &    27.16   &  [26.98, 27.33]  &  5s   & 27.19 & 26s  & 26.95 & 27s & 27.20 & 27.20 \\
            10  &  35.14     &  [34.94, 35.35]  &   6s  & 35.08 & 106s & 34.84 & 29s & 35.17 &  /     \\
            20  &  42.62 & [42.30, 42.93]  & 7s & 43.00 & 51090s & 42.62 & 35s & 42.76 &   /    \\
            \bottomrule
        \end{tabular}
    }
\end{table*}

In this section, we present a basic implementation of KRR-DP Algorithm~\ref{algo_option_pricing_SOCP} and conduct an initial evaluation of the effectiveness of the proposed method. More comprehensive experiments and optimized implementations will be the subject of future work. \\
We primarily compare our results with the numerical benchmarks reported in \cite{goudenege2020machine}. Specifically, we replicate the results in their Table 1 and Table 3, which correspond to pricing a geometric basket put option and a max-call option, respectively. Note that no theoretical benchmark exists for max-call options. The parameters are set as follows: $T=9$, $X_0^i = 100$ for $1 \leq i \leq d$, $K = 100$, $r = 0.05$, $\sigma_i = 0.2$ for $1 \leq i \leq d$, and $\rho_{ij} = 0.2$ for $1 \leq i \neq j \leq d$.\\
As regards the KRR solver, we employ the efficient FALKON algorithm \cite{meanti2020kernel}. This choice is particularly relevant as a first step toward building a fast and practical implementation of our algorithm for large-scale, high-dimensional applications. FALKON leverages random projection techniques, such as the Nystr\"om method \cite{williams2000using}, to reduce computational costs while maintaining optimal performance \cite{rudi_less_2015,della2021regularized,della2024nystrom}. A description of the involved methods and further details on our simulations are given in Appendix~\ref{app:numerical}.\\
The results show that our method performs competitively with existing algorithms, offering a favorable trade-off between accuracy and computational efficiency. 


\section{Conclusions and Future Work}

In this work, we addressed stochastic optimal control problems in discrete time and introduced a kernel-based regression framework for their solution. Our approach combines backward recursion via empirical Bellman operators with Monte Carlo simulation and regularized learning techniques to construct data-driven approximations of the value function. The framework is supported by rigorous theoretical guarantees, including explicit error bounds.\\
Several promising directions remain open for future work. First, we plan to extend the preliminary simulations presented above into a more comprehensive experimental study, incorporating real-world datasets and more complex models. In particular, our framework can be naturally adapted to other non-standard applications in economics, such as partial equilibrium and optimal consumption problems, or goal-based investing. In parallel, we aim to improve computational efficiency, especially in high-dimensional settings, by exploiting random projection techniques such as sketching, random features, or the Nystr\"om method, while preserving the statistical guarantees established in this work. Another major bottleneck in our pipeline is the data generation step: reducing the number $M$ of generated samples is critical for accelerating the \textsc{DataGeneration} function in Algorithm~\ref{algo_option_pricing_SOCP}. A promising approach may be to replace standard Monte Carlo sampling with more sophisticated quadrature schemes (e.g., monomial rules).


\bibliography{bibl2026}
\bibliographystyle{iclr2026_conference}

\newpage
\appendix

\section{Auxiliary Lemmas}
\label{app:aux_lemmas}

In this section, we prove a number of technical results that are instrumental for establishing the theoretical properties of our Bellman recursion in \( \Ltwomut \) spaces. In particular, we aim to verify that the Bellman operator \( \T_t \) is well defined and Lipschitz continuous under mild assumptions. These properties are essential for proving stability and convergence of our value function approximations.

We begin with a useful lemma on the behavior of essential suprema, which allows us to control expressions of the form \( \esssup_{u \in U_t} \lrb{F_t(\cdot, u) + P_t^u g} \) arising in the Bellman operator.

\begin{lemma}\label{lem:esssup}
Let \( \lrb{Y_a}_{a\in A} \) and \( \lrb{Z_a}_{a\in A} \) be two collections of random variables indexed by a parameter set \( A \), such that \( \esssup_{a\in A} \lra{Y_a} < \infty \) and \( \esssup_{a\in A} \lra{Z_a} < \infty \) almost surely. Then the following inequalities hold almost surely:
\begin{align}
	\lra{\esssup_{a\in A} \lrp{Y_a + Z_a}} &\leq \esssup_{a\in A} \lra{Y_a} + \esssup_{a\in A}\lra{Z_a}, \label{eq_lemma_esssup_1} \\
	\lra{\esssup_{a\in A} Y_a - \esssup_{a\in A} Z_a } &\leq \esssup_{a\in A} \lra{Y_a - Z_a} . \label{eq_lemma_esssup_2}
\end{align}
\end{lemma}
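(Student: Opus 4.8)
The plan is to establish the two claimed inequalities~(\ref{eq_lemma_esssup_1}) and~(\ref{eq_lemma_esssup_2}) separately, in each case splitting the absolute value into its two one-sided bounds and deriving each bound from an elementary pointwise inequality together with the two defining properties of the essential supremum: that $\esssup_{a\in A} W_a$ is an almost sure upper bound for every individual $W_a$, and that it is the $\mathbb P$-a.s.\ least such random variable (existence of such an object for an arbitrary, possibly uncountable, index set is classical). Since $A\neq\emptyset$ we may fix a reference index $a_0\in A$ for use in the lower bounds.

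For~(\ref{eq_lemma_esssup_1}) I would argue as follows. For the upper bound, observe that pointwise $Y_a+Z_a\le\lra{Y_a}+\lra{Z_a}$ for every $a$, then upgrade this via the first esssup property to $Y_a+Z_a\le\esssup_{b\in A}\lra{Y_b}+\esssup_{b\in A}\lra{Z_b}$ almost surely for each $a$; since the right-hand side is independent of $a$, minimality of the essential supremum yields the same bound for $\esssup_{a\in A}(Y_a+Z_a)$. For the lower bound, apply the first esssup property at $a_0$ to obtain $\esssup_{a\in A}(Y_a+Z_a)\ge Y_{a_0}+Z_{a_0}\ge-\lra{Y_{a_0}}-\lra{Z_{a_0}}\ge-\esssup_{b\in A}\lra{Y_b}-\esssup_{b\in A}\lra{Z_b}$ almost surely. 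Combining the two one-sided bounds gives~(\ref{eq_lemma_esssup_1}).

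For~(\ref{eq_lemma_esssup_2}) I would set $D:=\esssup_{a\in A}\lra{Y_a-Z_a}$ and run a reverse-triangle argument: pointwise $Y_a\le Z_a+\lra{Y_a-Z_a}$, hence $Y_a\le\esssup_{b\in A}Z_b+D$ almost surely for each $a$ by the first esssup property, hence $\esssup_{a\in A}Y_a\le\esssup_{a\in A}Z_a+D$ almost surely by minimality; exchanging the roles of $Y$ and $Z$ yields the matching estimate with $Y$ and $Z$ interchanged, and the two together give~(\ref{eq_lemma_esssup_2}).

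\textbf{Where the difficulty lies.} There is no genuine obstacle here --- the argument is pure bookkeeping --- and the only point requiring care is that all comparisons between random variables hold merely $\mathbb P$-almost surely, so one must check that the finitely many null sets entering each step may be discarded simultaneously (they can, being finitely many null sets). To be fully rigorous about essential suprema over an uncountable $A$, it is worth recording that $\esssup_{a\in A}W_a=\esssup_{a\in A'}W_a$ for some countable $A'\subseteq A$, which legitimises every ``a.s.'' manipulation above.
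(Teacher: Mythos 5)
Your proof is correct and follows essentially the same elementary route as the paper: both inequalities come from the two defining properties of the essential supremum (a.s.\ upper bound, and a.s.\ least such bound), and your remark on reducing to a countable subfamily correctly legitimises the a.s.\ manipulations over an uncountable index set. The only difference is in the second inequality: you give the direct reverse-triangle argument ($Y_a \le Z_a + |Y_a - Z_a|$, take essential suprema, symmetrise), whereas the paper first translates so that $Y_a, Z_a \ge 0$ and then invokes the first inequality; both work, and yours is marginally more self-contained.
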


\begin{proof}
The first bound follows from the general inequality \( \lra{\esssup_{a\in A} Y_a} \leq \esssup_{a\in A} \lra{Y_a} \). For the second inequality, we exploit the invariance under translations: the statement holds if we replace \( Y_a \) and \( Z_a \) by \( Y_a + C \) and \( Z_a + C \), for any random variable \( C \). Choosing \( C=\max\lrb{\esssup_{a\in A} (-Y_a), \ \esssup_{a\in A} (-Z_a) } \), we can assume without loss of generality that \( Y_a, Z_a \geq 0 \). Then we obtain from Eq.~\ref{eq_lemma_esssup_1} that $\esssup_{a\in A} Y_a \leq \esssup_{a\in A} \lra{Y_a-Z_a} + \esssup_{a\in A} Z_a$, and same for $Y_a$ and $Z_a$ exchanged, which proves Eq.~\ref{eq_lemma_esssup_2}.
\end{proof}

With this result in hand, we now analyze the properties of the Bellman operator \( \T_t \) as defined in Eq.~\ref{eq:def_S}. The following lemma shows that, under suitable assumptions, \( \T_t \) maps \( \Ltwomutpo \) to \( \Ltwomut \) in a controlled way and satisfies a global Lipschitz bound.

\begin{lemma}
\label{lemma_bound_St}
Under conditions~\ref{eq_ass_Ptu_Lipshitz} in Assumption~\ref{ass:L_2}, the Bellman operator \( \T_t \) defines a Lipschitz continuous map satisfying:
\begin{align}
	\norm{\T_t g}_\Ltwomut &\leq c_F + c_P^{1/2} \norm{g}_\Ltwomutpo, \label{eq_St_contracting} \\
	\norm{\T_t g - \T_t f}_\Ltwomut^2 &\leq c_P \norm{g - f}_\Ltwomutpo^2, \label{eq_St_Lipshitz}
\end{align}
for all \( g, f\in\Ltwomutpo \) and \( t=0, \ldots, \Tmo \).
\end{lemma}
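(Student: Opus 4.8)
The plan is to derive both inequalities pointwise in $x \in \X_t$ using Lemma~\ref{lem:esssup}, and then take $\Ltwomut$-norms of the resulting pointwise bounds, invoking Assumption~\ref{ass:L_2} to convert the essential-supremum norms into the constants $c_F$ and $c_P$. The two displayed estimates in~\ref{eq_ass_Ptu_Lipshitz} and the two inequalities of Lemma~\ref{lem:esssup} are precisely tailored to this purpose, so the argument is short.

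For the growth estimate~\ref{eq_St_contracting}, I would fix $x \in \X_t$ and apply inequality~\ref{eq_lemma_esssup_1} with the family indexed by $u \in \U_t$, setting $Y_u = F_t(x,u)$ and $Z_u = P_t^u g(x)$. The a.e.\ finiteness hypotheses of Lemma~\ref{lem:esssup} — that $\esssup_{u\in\U_t}|F_t(x,u)|$ and $\esssup_{u\in\U_t}|P_t^u g(x)|$ are finite for $\mu_t$-a.e.\ $x$ — follow since both functions belong to $\Ltwomut$ by~\ref{eq_ass_Ptu_Lipshitz}. This yields, $\mu_t$-a.e.,
\[
    \bigl|\T_t g(x)\bigr| = \Bigl|\esssup_{u\in\U_t}\bigl(F_t(x,u)+P_t^u g(x)\bigr)\Bigr| \le \esssup_{u\in\U_t}\bigl|F_t(x,u)\bigr| + \esssup_{u\in\U_t}\bigl|P_t^u g(x)\bigr|.
\]
Taking $\norm{\cdot}_\Ltwomut$, using the triangle inequality in $\Ltwomut$, and then the two bounds of~\ref{eq_ass_Ptu_Lipshitz} gives $\norm{\T_t g}_\Ltwomut \le c_F + c_P^{1/2}\norm{g}_\Ltwomutpo$, which is~\ref{eq_St_contracting}; in particular $\T_t$ maps $\Ltwomutpo$ into $\Ltwomut$.

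For the Lipschitz estimate~\ref{eq_St_Lipshitz}, the key observation is that the reward term $F_t(x,u)$ is common to the two essential suprema and cancels. Fixing $x$ and setting $Y_u = F_t(x,u)+P_t^u g(x)$, $Z_u = F_t(x,u)+P_t^u f(x)$, the finiteness hypotheses of Lemma~\ref{lem:esssup} hold by the same reasoning (bound $\esssup_u|Y_u|$ by $\esssup_u|F_t(x,u)|+\esssup_u|P_t^u g(x)|$, and similarly for $Z_u$ with $f$). Because $P_t^u$ is linear — it is an integral against $\mathbb{P}_{t+1}$, cf.~Eq.~\ref{eq:def_P} — we have $Y_u - Z_u = P_t^u(g-f)(x)$, so inequality~\ref{eq_lemma_esssup_2} gives, $\mu_t$-a.e.,
\[
    \bigl|\T_t g(x) - \T_t f(x)\bigr| \le \esssup_{u\in\U_t}\bigl|P_t^u(g-f)(x)\bigr|.
\]
Squaring, integrating against $\mu_t$, and applying the second bound in~\ref{eq_ass_Ptu_Lipshitz} to $g - f \in \Ltwomutpo$ yields $\norm{\T_t g - \T_t f}_\Ltwomut^2 \le c_P\,\norm{g-f}_\Ltwomutpo^2$, which is~\ref{eq_St_Lipshitz}.

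The entire substance lies in the two displayed pointwise inequalities; everything else is the triangle inequality in $\Ltwomut$ plus Assumption~\ref{ass:L_2}, so I do not expect a genuine obstacle. The one point worth stating carefully is the bookkeeping around the essential suprema: $\esssup_{u\in\U_t}$ must be read in the lattice (order-essential) sense so that $x \mapsto \T_t g(x)$ is a well-defined measurable element of $\Ltwomut$, and the $\mu_t$-a.e.\ finiteness hypotheses of Lemma~\ref{lem:esssup} must be checked before each application — both being immediate from the $\Ltwomut$-bounds in~\ref{eq_ass_Ptu_Lipshitz}. Thanks to the cancellation of $F_t$ and the linearity of $P_t^u$, no measurable selection over $\U_t$ is required.
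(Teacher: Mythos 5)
Your proof is correct and follows essentially the same route as the paper's: a pointwise application of Lemma~\ref{lem:esssup} (inequality~\ref{eq_lemma_esssup_1} for the growth bound; inequality~\ref{eq_lemma_esssup_2} together with the cancellation of $F_t$ and the linearity of $P_t^u$ for the Lipschitz bound), followed by the triangle inequality in $\Ltwomut$ and the two bounds of Assumption~\ref{ass:L_2}. Your explicit check of the a.e.\ finiteness hypotheses and of measurability of the essential supremum is a harmless refinement of what the paper leaves implicit.
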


\begin{proof}
We begin by bounding the operator norm:
\begin{align*}
	\norm{\T_t g }_\Ltwomut &= \norm{\esssup_{u\in U_t} \lrb{F_t(\cdot, u) + P_t^u g} }_\Ltwomut \\
    &\leq  \norm{\esssup_{u\in U_t} \lra{F_t(\cdot, u)} + \esssup_{u\in U_t}\lra{ P_t^u g}}_\Ltwomut \\
	&\leq  \norm{\esssup_{u\in U_t} \lra{F_t(\cdot, u)}}_\Ltwomut  + \norm{ \esssup_{u\in U_t}\lra{ P_t^u g}}_\Ltwomut \\
    &\leq c_F + c_P^{1/2} \norm{g}_\Ltwomutpo ,
\end{align*}
where we used Lemma~\ref{lem:esssup} and Assumption~\ref{ass:L_2}. For the Lipschitz property, we compute:
\begin{align*}
	\norm{\T_t g - \T_t f}_\Ltwomut &= \norm{\esssup_{u\in U_t} \lrb{F_t(\cdot, u) + P_t^u g} - \esssup_{u\in U_t} \lrb{F_t(\cdot, u) + P_t^u f}}_\Ltwomut \\
	&= \norm{\esssup_{u\in U_t} \lra{ P_t^u (g - f)}}_\Ltwomut \\
	&\leq c_P^{1/2} \norm{g - f}_\Ltwomutpo,
\end{align*}
again applying Lemma~\ref{lem:esssup} and that $P_t^u$ is a linear operator.
\end{proof}

\section{Technical Details on Section~\ref{sec:err}}
\label{app:known}

\subsection{Term I}
\label{app_termI}
In this section, we give further details about the analysis of our learning-based approximation scheme in Section~\ref{sec:err}. 

We start with the optimal learning rates established for regularized empirical risk minimization in RKHS. The following theorem is taken from \citet{steinwart2009optimal}.

\begin{theorem*}[{\citet[Theorem 1]{steinwart2009optimal}}]
Let $k$ be a bounded measurable kernel on $X$ with $\|k\|_{\infty}=1$ and separable RKHS $\H$. Let
\begin{equation}
A_q(\lambda):=\inf _{f \in \H}\left(\lambda\|f\|_\H^q+\mathcal{R}(f)-\mathcal{R}^*\right).
\end{equation}
Moreover, let $P$ be a distribution on $X \times[-B, B]$, where $B>0$ is some constant. For $\nu=P_X$ assume that the extended sequence of eigenvalues of the integral operator satisfies
\begin{equation}
\label{eq:eigs_decay}
\mu_i\left(L_k\right) \leq a i^{-\frac{1}{p}}, \quad i \geq 1,
\end{equation}
where $a \geq 16 M^4$ and $p \in(0,1)$. Assume further that there exist constants $C \geq 1$ and $s \in(0,1]$ such that
\begin{equation}
\|f\|_{\infty} \leq C\|f\|_\H^s \cdot\|f\|_{L_2\left(P_X\right)}^{1-s}
\end{equation}
for all $f \in \H$. Then, for all $q \geq 1$, there exists a constant $c_{p, q}$ depending only on $p$ and $q$ such that for all $\lambda \in(0,1]$, $\tau>0$, and $n \geq 1$, with probability at least $1-3 e^{-\tau}$
\begin{equation}
\begin{aligned}
\mathcal{R}\left(\wh{f}_{\lambda}\right)-\mathcal{R}^*
\leq  9 A_q(\lambda)+c_{p, q}\left(\frac{a^{p q} B^{2 q}}{\lambda^{2 p} n^q}\right)^{\frac{1}{q-2 p+p q}} +\frac{120 C^2 B^{2-2 s} \tau}{n}\left(\frac{A_q(\lambda)}{\lambda}\right)^{\frac{2 s}{q}}+\frac{3516 B^2 \tau}{n}
\end{aligned}
\end{equation}
with $\mathcal{R}^*:=\mathcal{R}(f^*)$ the risk of the Bayes function $f^*\in L^2(P_X)$ and $\wh{f}_{\lambda}$ the data dependent estimator from ERM algorithm.
\end{theorem*}
Note that Eq~\ref{eq:eigs_decay} is exactly the condition mentioned under Eq.~\ref{eq:termI rate}. We give here more details on the connection with the \textit{capacity assumption}. Before defining it, we define the so-called \textit{effective
dimension} \cite{zhang2005learning,caponnetto2007optimal}, for $\alpha>0$, as
\begin{align}
&d_\alpha= \text{Tr}((L_k+\alpha I)^{-1}L_k) =\sum_j
                \frac{\sigma_j}{\sigma_j+\alpha}  
\end{align}
where  $(\sigma_j)_j$ are the strictly positive eigenvalues of
$L_k$,  with   eigenvalues  counted with respect to their
multiplicity and ordered in a non-increasing way, and $(u_j)$ is  the
corresponding family of eigenvectors. 
\begin{ass}[Capacity Assumption]
There exist constants $p \geq 1$ and $Q>0$ such that, for all $\alpha \in(0,1]$
$$
d_\alpha \leq Q \alpha^{-1/p}.
$$
\end{ass}

This assumption, standard in statistical learning theory (see Caponnetto \& De Vito, 2007; Smale \& Zhou, 2007), is often referred to as a capacity condition, as it quantifies the effective size of the RKHS via the decay of the eigenvalues of the integral operator $
L_k$ (see Proposition~\ref{prop: eig polynom decay} and~\ref{prop: eig exp decay} below). \\Note that the case $p=1$ corresponds to no spectral assumption (i.e. the weakest possible capacity control), which is the setting we adopt in the main text.

The following two results provide a tight bound on the effective
dimension under the assumption of  a
polynomial decay or an exponential decay of the eigenvalues $\sigma_j$
of $L_k$. Since the covariance operator $\Sigma$ and the integral operator $L_k$ share the same eigenvalues, we equivalently report known proofs for $\Sigma$ in the following.

\begin{proposition}[Polynomial eigenvalues decay {\citet[Proposition 3]{caponnetto2007optimal}}]
            \label{prop: eig polynom decay}
If for some $\gamma\in\R^+$ and  $1<p<+\infty$
\[
\sigma_i \leq \gamma i^{-p}
\]
then 
				\begin{equation}
				d_\alpha\leq \gamma\frac{p}{p-1}\alpha^{-1/p}
                              \end{equation}
		\begin{proof}
			Since the function $\sigma/(\sigma+\alpha)$ is increasing in $\sigma$ and using the spectral theorem $\Sigma=UDU^*$ combined with the fact that $\tr (UDU^*)=\tr (U(U^* D))=\tr D$
			\begin{equation}
			d_\alpha=\tr (\Sigma(\Sigma+\alpha I)^{-1})=\sum_{i=1}^\infty \frac{\sigma_i}{\sigma_i+\alpha}\leq \sum_{i=1}^\infty \frac{\gamma}{\gamma+i^p\alpha}
			\end{equation}
			The function $\gamma/(\gamma+x^p\alpha)$ is positive and decreasing, so 
			\begin{align}
			d_\alpha&\leq \int_0^\infty\frac{\gamma}{\gamma+x^p\alpha}dx\nonumber\\
			&=\alpha^{-1/p}\int_0^\infty\frac{\gamma}{\gamma+\tau^p}d\tau\nonumber\\
			&\leq \gamma\frac{p}{p-1}\alpha^{-1/p}
			\end{align}
			since $\int_0^\infty(\gamma+\tau^p)^{-1}\leq p/(p-1)$.
		\end{proof}
	\end{proposition}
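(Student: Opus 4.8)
The plan is to reduce the series defining $d_\alpha=\sum_{i\geq 1}\sigma_i/(\sigma_i+\alpha)$ to a one–dimensional integral, and then extract the $\alpha^{-1/p}$ scaling by a change of variables. First I would use that, for fixed $\alpha>0$, the map $t\mapsto t/(t+\alpha)$ is nondecreasing on $[0,\infty)$; combined with the hypothesis $\sigma_i\leq\gamma i^{-p}$ this gives the termwise bound $\sigma_i/(\sigma_i+\alpha)\leq \gamma i^{-p}/(\gamma i^{-p}+\alpha)=\gamma/(\gamma+\alpha i^p)$, hence $d_\alpha\leq\sum_{i\geq1}\gamma/(\gamma+\alpha i^p)$.

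Next I would compare the series with an integral. The function $x\mapsto \gamma/(\gamma+\alpha x^p)$ is positive and strictly decreasing on $(0,\infty)$, so by the integral test $\sum_{i\geq1}\gamma/(\gamma+\alpha i^p)\leq\int_0^\infty \gamma/(\gamma+\alpha x^p)\,dx$. The substitution $x=(\gamma/\alpha)^{1/p}\tau$ turns $\alpha x^p$ into $\gamma\tau^p$ and factors a $(\gamma/\alpha)^{1/p}$ out of the integral, so the bound becomes a fixed multiple of $\alpha^{-1/p}$ times $\int_0^\infty d\tau/(1+\tau^p)$; this is the step that isolates the claimed $\alpha^{-1/p}$ dependence, and the integral is finite precisely because $p>1$.

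Finally I would bound the remaining dimensionless constant $\int_0^\infty d\tau/(1+\tau^p)$ by splitting the domain at $\tau=1$: on $[0,1]$ the integrand is at most $1$, contributing at most $1$; on $[1,\infty)$ we use $1/(1+\tau^p)\leq\tau^{-p}$ together with $\int_1^\infty \tau^{-p}\,d\tau=1/(p-1)$. Adding the two pieces gives the constant $p/(p-1)$, and assembling all the estimates yields $d_\alpha\leq \gamma\,\frac{p}{p-1}\,\alpha^{-1/p}$, with the precise placement of the $\gamma$–factor following by carrying the substitution through. I do not expect a genuine obstacle here; the only points deserving a line of justification are the monotonicity used in the integral test (to legitimately write $\sum_{i\geq1}\leq\int_0^\infty$ rather than the usual $f(1)+\int_1^\infty$, one notes $\int_0^1 f\geq f(1)$ by monotonicity) and the role of the assumption $p>1$, without which the effective dimension need not be finite at all.
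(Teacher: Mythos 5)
Your proposal follows the paper's proof essentially step for step: the termwise monotonicity bound $\sigma_i/(\sigma_i+\alpha)\leq\gamma/(\gamma+\alpha i^p)$, the comparison of the series with $\int_0^\infty\gamma/(\gamma+\alpha x^p)\,dx$ for a positive decreasing integrand, a change of variables to extract $\alpha^{-1/p}$, and the split of the residual integral at $1$ to obtain $p/(p-1)$. The only point to correct is your last sentence about the constant: with your substitution $x=(\gamma/\alpha)^{1/p}\tau$ one gets
\[
\int_0^\infty\frac{\gamma}{\gamma+\alpha x^p}\,dx
=\Big(\frac{\gamma}{\alpha}\Big)^{1/p}\int_0^\infty\frac{d\tau}{1+\tau^p}
\leq \gamma^{1/p}\,\frac{p}{p-1}\,\alpha^{-1/p},
\]
so ``carrying the substitution through'' places a factor $\gamma^{1/p}$, not $\gamma$. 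This yields the displayed bound only when $\gamma\geq1$; for $\gamma<1$ your bound is weaker than the one claimed, and indeed $\gamma^{1/p}$ is the correct scaling (take $\sigma_i=\gamma i^{-p}$ and let $\alpha\to0$). This is not a defect of your method relative to the paper's: the paper keeps $\gamma$ inside the integral and invokes $\int_0^\infty(\gamma+\tau^p)^{-1}d\tau\leq p/(p-1)$, which by the same split at $\tau=1$ is bounded by $1/\gamma+1/(p-1)$ and hence also requires $\gamma\geq1$. So either state the conclusion with $\gamma^{1/p}$ in place of $\gamma$, or record the normalization $\gamma\geq1$, rather than asserting that the stated placement of $\gamma$ falls out of the substitution.
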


A similar result, leading to even faster rates, can be obtained assuming an exponential decay.
	
\begin{proposition}[Exponential eigenvalues decay {\citet[Proposition 3]{della2024nystrom}}]\label{prop:Exponential eigenvalues decay}
		\label{prop: eig exp decay}
                  If for some $\gamma,p \in\R^+
                  \sigma_i\leq \gamma e^{-p i}$ then
		\begin{equation}
		d_\alpha\leq \frac{\ln(1+\gamma/\alpha)}{p}
		\end{equation}
		\begin{proof}
			\begin{align}
			\label{exp decay}
			d_\alpha&=\sum_{i=1}^\infty \frac{\sigma_i}{\sigma_i+\alpha}=\sum_{i=1}^\infty \frac{1}{1+\alpha/\sigma_i}\leq\sum_{i=1}^\infty \frac{1}{1+\alpha' e^{p i}}\leq\int_0^{+\infty} \frac{1}{1+\alpha' e^{p x}}dx
			\end{align}
			where $\alpha'=\alpha/\gamma$. Using the change of variables $t=e^{p x}$ we get
			\begin{align}
			(\ref{exp decay})&=\frac{1}{p}\int_1^{+\infty} \frac{1}{1+\alpha' t}\;\frac{1}{t}dt=\frac{1}{p}\int_1^{+\infty}\Big[\frac{1}{t}- \frac{\alpha'}{1+\alpha' t}\Big]dt=\frac{1}{p}\Big[ \ln t -\ln(1+\alpha't)\Big]_1^{+\infty}\nonumber\\
			&=\frac{1}{p}\Big[ \ln \Big(\frac{t}{1+\alpha't}\Big)\Big]_1^{+\infty}=\frac{1}{p}\Big[\ln(1/\alpha')+\ln(1+\alpha')\Big]
			\end{align}
			So we finally obtain
			\begin{equation}
			d_\alpha\leq \frac{1}{p}\Big[\ln(\gamma/\alpha)+\ln(1+\alpha/\gamma)\Big]=\frac{\ln(1+\gamma/\alpha)}{p}
			\end{equation}
		\end{proof}
	\end{proposition}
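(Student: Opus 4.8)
The plan is to work directly from the series definition $d_\alpha = \sum_{i=1}^\infty \frac{\sigma_i}{\sigma_i+\alpha}$ and reduce everything to a single elementary integral. First I would rewrite each summand as $\frac{\sigma_i}{\sigma_i+\alpha} = \frac{1}{1+\alpha/\sigma_i}$ and observe that this is increasing in $\sigma_i$; hence the hypothesis $\sigma_i \leq \gamma e^{-pi}$ yields the termwise bound $\frac{1}{1+\alpha/\sigma_i} \leq \frac{1}{1+\alpha' e^{pi}}$ with $\alpha' := \alpha/\gamma$. This replaces the unknown spectrum by an explicit geometric-type tail that is summable for every $\alpha>0$.

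Next I would pass from the sum to an integral. Since $x \mapsto (1+\alpha' e^{px})^{-1}$ is positive and strictly decreasing, the integral test gives $\sum_{i=1}^\infty (1+\alpha' e^{pi})^{-1} \leq \int_0^\infty (1+\alpha' e^{px})^{-1}\,dx$. The substitution $t = e^{px}$ (so that $dx = dt/(pt)$, with $t$ ranging over $[1,\infty)$) turns this into $\frac{1}{p}\int_1^\infty \frac{dt}{t(1+\alpha' t)}$, which I would split by partial fractions as $\frac{1}{t} - \frac{\alpha'}{1+\alpha' t}$.

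Finally I would evaluate the antiderivative $\ln\big(t/(1+\alpha' t)\big)$ between $1$ and $\infty$. The upper limit tends to $\ln(1/\alpha')$ and the lower limit is $\ln(1/(1+\alpha'))$, so the integral equals $\frac{1}{p}\big[\ln(1/\alpha') + \ln(1+\alpha')\big] = \frac{1}{p}\ln\big((1+\alpha')/\alpha'\big)$. Substituting back $\alpha' = \alpha/\gamma$ collapses this to $\frac{\ln(1+\gamma/\alpha)}{p}$, which is the claimed bound.

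The computation is almost entirely routine; the only points deserving care sit on either side of the integral. The integral-comparison step needs the summand to be monotone decreasing in the index, which is immediate from the exponential bound and $p>0$. The evaluation at the upper limit requires checking that the boundary term $\ln\big(t/(1+\alpha' t)\big)$ converges rather than diverging as $t\to\infty$ — it does, precisely because the $\frac{1}{t}$ and $\frac{\alpha'}{1+\alpha' t}$ pieces have matching logarithmic growth, so their difference stays bounded. I would also note that the argument holds for any fixed $\alpha \in (0,1]$ and uses no lower bound on the eigenvalues, so it is insensitive to how many of the $\sigma_i$ vanish.
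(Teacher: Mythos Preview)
Your proposal is correct and follows essentially the same route as the paper: rewrite each summand as $(1+\alpha/\sigma_i)^{-1}$, use the exponential hypothesis and monotonicity to bound by $(1+\alpha' e^{pi})^{-1}$, dominate the sum by $\int_0^\infty (1+\alpha' e^{px})^{-1}\,dx$, substitute $t=e^{px}$, and evaluate via the partial fraction $\tfrac{1}{t}-\tfrac{\alpha'}{1+\alpha' t}$. Your added remarks on the monotonicity needed for the integral comparison and on the finiteness of the boundary term at $t\to\infty$ are sound and make explicit points the paper leaves implicit.
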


Specializing this result to ridge regression, and under an additional approximation condition on the learning target, we obtain a more explicit convergence rate in terms of the sample size.

\begin{corollary*}[{\citet[Corollary 6]{steinwart2009optimal}}]
Assume $s=p=1$, $q=2$, and suppose the 2-approximation error function satisfies
\begin{equation}
\label{eq:approx}
A_2(\lambda) \leq c \lambda^\beta, \quad \lambda>0  
\end{equation}
for some constants $c>0$ and $\beta>0$. Define a sequence of regularization parameters $\lambda := n^{-\frac{1}{\beta+1}}$. Then there exists a constant $K \geq 1$ depending only on $a$, $B$, and $c$, such that for all $\tau \geq 1$ and $n \geq 1$,
\begin{equation}
\mathcal{R}\left(\wh{f}_{\lambda}\right)-\mathcal{R}(f^*) \leq K \tau n^{-\frac{\beta}{\beta+1}}
\end{equation}
with probability at least $1-3 e^{-\tau n^{\frac{\beta}{\beta+1}}}$.
\end{corollary*}
This is the result reported in Theorem~\ref{thm:err}, given that source condition in Assumption~\ref{ass:source_cond} implies condition in Eq.\ref{eq:approx} as shown in \cite{smale2003estimating}.

\subsection{Term II}
\label{app:termII}
We start by defining the empirical Rademacher complexity:
\begin{equation}
    \wh{\mathcal{R}}(\F^x_t) \coloneq \mathbb{E}_\sigma \sup_{f \in \F^x_t} \left| \frac{1}{M_t}\sum_{i=1}^{M_t} \sigma_i f(z_i)\right|,
\end{equation}
with $\sigma_1,\dots,\sigma_{M_t}$ independent Rademacher variables, i.e. $\P(\sigma_i=1)=\P(\sigma_i=-1)=1/2$.

To control the empirical approximation error uniformly over a function class, we rely on the following concentration inequality due to \citet{boucheron2005theory}.

\begin{lemma*}[{\citet[Theorem 3.2]{boucheron2005theory}}] 
Let $X_1, \ldots, X_n$ be i.i.d. random variables in a set $\mathcal{X}$ and let $\mathcal{F}$ be a class of functions $\mathcal{X} \rightarrow[-1,1]$. Then, with probability at least $1-\delta$,
\begin{equation}
\sup _{f \in \mathcal{F}}\left|\mathbb{E} f\left(X\right)-\frac 1 n \sum_{i=1}^n f\left(X_i\right)\right| \leq 2 \mathbb{E} \wh{\mathcal{R}}\left(\mathcal{F}(X_1^n)\right)+\sqrt{\frac{2 \log \frac{1}{\delta}}{n}},
\end{equation}
with
\begin{equation}
\wh{\mathcal{R}}(A)=\mathbb{E} \sup _{a \in A} \frac{1}{n}\left|\sum_{i=1}^n \sigma_i a_i\right|,
\end{equation}
where $A \subset \mathbb{R}^n$ and $\mathcal{F}(x_1^n)$ is the class of vectors \( (f(x_1), \ldots, f(x_n)) \) for \( f \in \mathcal{F} \).

We also have:
\begin{equation}
\sup _{f \in \mathcal{F}}\left|\mathbb{E} f\left(X\right)-\frac 1 n \sum_{i=1}^n f\left(X_i\right)\right| \leq 2 \wh{\mathcal{R}}\left(\mathcal{F}(X_1^n)\right)+\sqrt{\frac{2 \log \frac{2}{\delta}}{n}}.
\end{equation}
\end{lemma*}

There are several well-studied cases in which the Rademacher complexity can be upper bounded. We highlight two such cases that are particularly relevant for the financial applications of interest here.

\begin{itemize}
    \item Using Massart’s Lemma~\cite{massart2000some}: if \( \F^x_t \) is finite, i.e., \( \F^x_t = \{f_1, \dots, f_K\} \), then 
    \begin{equation}
    \E\wh{\mathcal{R}}(\F^x_t) \lesssim \sqrt{\frac{ \log K}{M_t}}.
    \end{equation}
    This result is particularly relevant for our application to American options, as the control set \( \mathcal{U}_t = \{0,1\} \) is finite at each time step \( t \). 
    \item Using Talagrand’s Contraction Lemma~\cite{ledoux1991probability}: if \( \F^x_t \) is not finite, \( \wh{W}^{\lambda_{t+1}}_{t+1} \) is \( L_W \)-Lipschitz, and we define
    $
    \Pi^x_t \coloneq \left\{ z \mapsto \pi_t(x,u,z) : u \in \mathcal{U}_t \right\},
    $
    then the composition class \( \mathcal{F}^x_t = \wh{W}^{\lambda_{t+1}}_{t+1} \circ \Pi^x_t \) satisfies
    \begin{equation}
    \wh{\mathcal{R}}(\F^x_t) \leq L_W \cdot \wh{\mathcal{R}}(\Pi^x_t).
    \end{equation}
    Assuming that \( \pi_t(x,u,z) \) is $L_\pi$-Lipschitz in \( u \) 
    and applying standard covering number arguments we obtain
    \begin{equation}
    \E\wh{\mathcal{R}}(\F^x_t) \lesssim \frac{L_W \cdot L_\pi}{\sqrt{M_t}}.
    \end{equation}
    This can be useful in the continuous control case, e.g., \( \mathcal{U}_t \subset [0,1] \), as the class \( \Pi^x_t \) is no longer finite.
\end{itemize}

We report the two above mentioned results.
\begin{lemma*}[Massart’s Lemma {\cite{massart2000some}, \cite[Lemma 26.8]{shalev2014understanding}}]
\label{lem:massart}
Let $\mathcal{F} = \{f_1, \dots, f_K\}$ be a finite class of functions satisfying \( \|f\|_\infty \leq b \) for all \( f \in \mathcal{F} \). Then,
\begin{equation}
\wh{\mathcal{R}}(\mathcal{F}) \leq b \sqrt{\frac{2 \log K}{n}}.
\end{equation}
\end{lemma*}


\begin{lemma*}[Contraction Inequality {\cite[Thm.~12]{bartlett2002rademacher}, \cite[Cor.~3.17]{ledoux1991probability}}]
\label{lem:lipschitz_composition}
Let \( \mathcal{F} \subset \mathbb{R}^\mathcal{Z} \) be a class of real-valued functions, and let \( \phi_1, \dots, \phi_n: \mathbb{R} \to \mathbb{R} \) be \( L \)-Lipschitz functions. Let \( S = \{ z_1, \dots, z_n \} \subset \mathcal{Z} \) be a fixed sample. Then
\begin{equation}
\mathbb{E}_\sigma \left[ \sup_{f \in \mathcal{F}} \frac{1}{n} \sum_{i=1}^n \sigma_i \phi_i(f(z_i)) \right]
\leq
L \cdot \mathbb{E}_\sigma \left[ \sup_{f \in \mathcal{F}} \frac{1}{n} \sum_{i=1}^n \sigma_i f(z_i) \right],
\end{equation}
where \( \sigma_1, \dots, \sigma_n \) are independent Rademacher random variables.
\end{lemma*}

\subsection{Final Bound}
Given the above upper bounds on the three terms in Eq.~\ref{eq:splitting}, and choosing $\lambda \sim n^{-\frac{1}{\beta_t+1}}$, we have with high probability 
\begin{align}
     \mathcal{E}_t & \lesssim  
     \left(\frac{1}{n_t}\right)^{\frac{\beta_t}{\beta_t+1}}+\frac{ 1}{M_t}+c_P\mathcal{E}_{t+1}. 
\end{align}
Selecting $M_t \sim n_t^{\frac{\beta_t}{\beta_t+1}}$ gives the result in Theorem~\ref{thm:err}.

\section{Numerical Simulations}
\label{app:numerical}

Firstly, we briefly describe the benchmark methods used for comparison in Tables~\ref{tab:put} and~\ref{tab:call}, following~\cite{goudenege2020machine}.

\paragraph{GPR-Tree.} This method combines Gaussian Process Regression (GPR) with a tree-based exercise strategy. At each time step, the continuation value is estimated using GPR, and a decision tree determines whether to exercise or continue. The method is designed to reduce variance and improve interpretability, particularly in low-dimensional settings. We report the results from \cite[Tables 1–3]{goudenege2020machine} using $P = 1000$ training points, which offers the highest reported accuracy despite increased computational cost compared to $P = 250$ or $P = 500$.

\paragraph{GPR-EI.} GPR with Expected Improvement (EI) follows a sequential design strategy inspired by Bayesian optimization. It actively selects the most informative sample points by maximizing expected improvement in the value function, enabling a more data-efficient approximation of the continuation value. As with GPR-Tree, we report the results with $P = 1000$ training points.

\paragraph{GPR-MC.} This variant uses GPR to estimate the continuation value within a standard Monte Carlo regression framework. It replaces linear regression with nonparametric GPR to improve accuracy, especially in high-dimensional problems.

\paragraph{Ekvall.} This baseline method is based on the lattice-based regression approach proposed in \cite{ekvall1996lattice}, which approximates the value function using basis functions and optimal stopping. It serves as a classical benchmark for evaluating newer machine learning-based methods.

\paragraph{Benchmark.} A closed-form analytical solution is available only for the Geometric Basket Put option.

\paragraph{Our method.} We kept a basic implementation, exploiting classic libraries. We report the average performance of our method over 10 repetitions, along with corresponding confidence intervals. The regularization parameter is simply set to $\lambda = 10^{-6}$, and the RBF kernel lengthscale is selected from the grid $\{40, 80\}$. Sample sizes increase with dimensionality; for instance: for $d = 2$, we use $n = 200$, $M = 50$; for $d = 20$, we use $n = 800$, $M = 100$. All experiments were run on Google Colab using an NVIDIA T4 GPU (16 GB) with a single Intel Xeon CPU and approximately 12 GB of RAM. The FALKON algorithm \cite{meanti2020kernel} is taken from https://github.com/FalkonML/falkon.

\begin{figure}[h!]
\centering
\includegraphics[width=0.9\columnwidth]{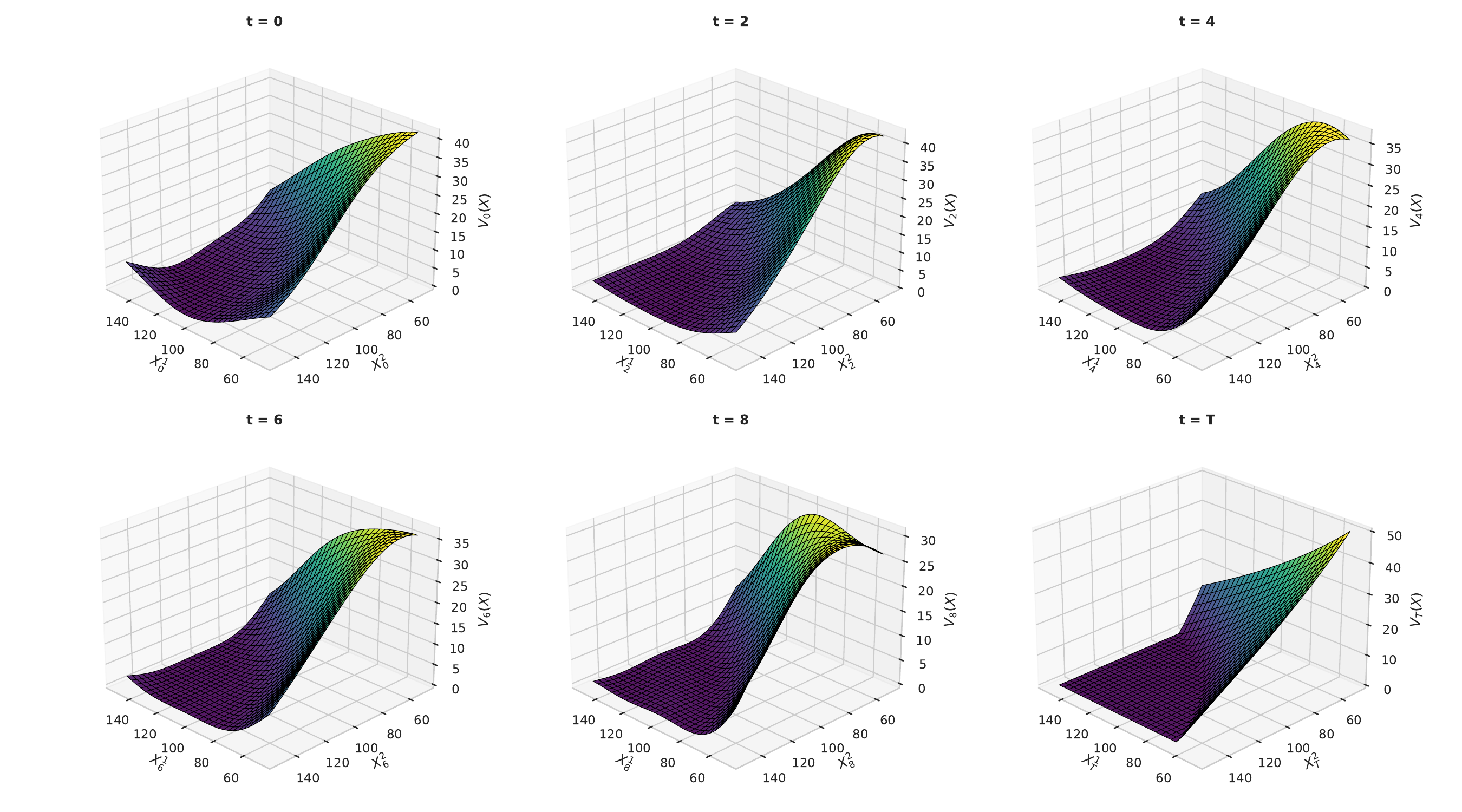}
\caption{Value function estimates for the Geometric Basket Put option ($d = 2$), see Table~\ref{tab:put}.}
\label{fig:put}
\end{figure}

\begin{figure}[h!]
\centering
\includegraphics[width=0.9\columnwidth]{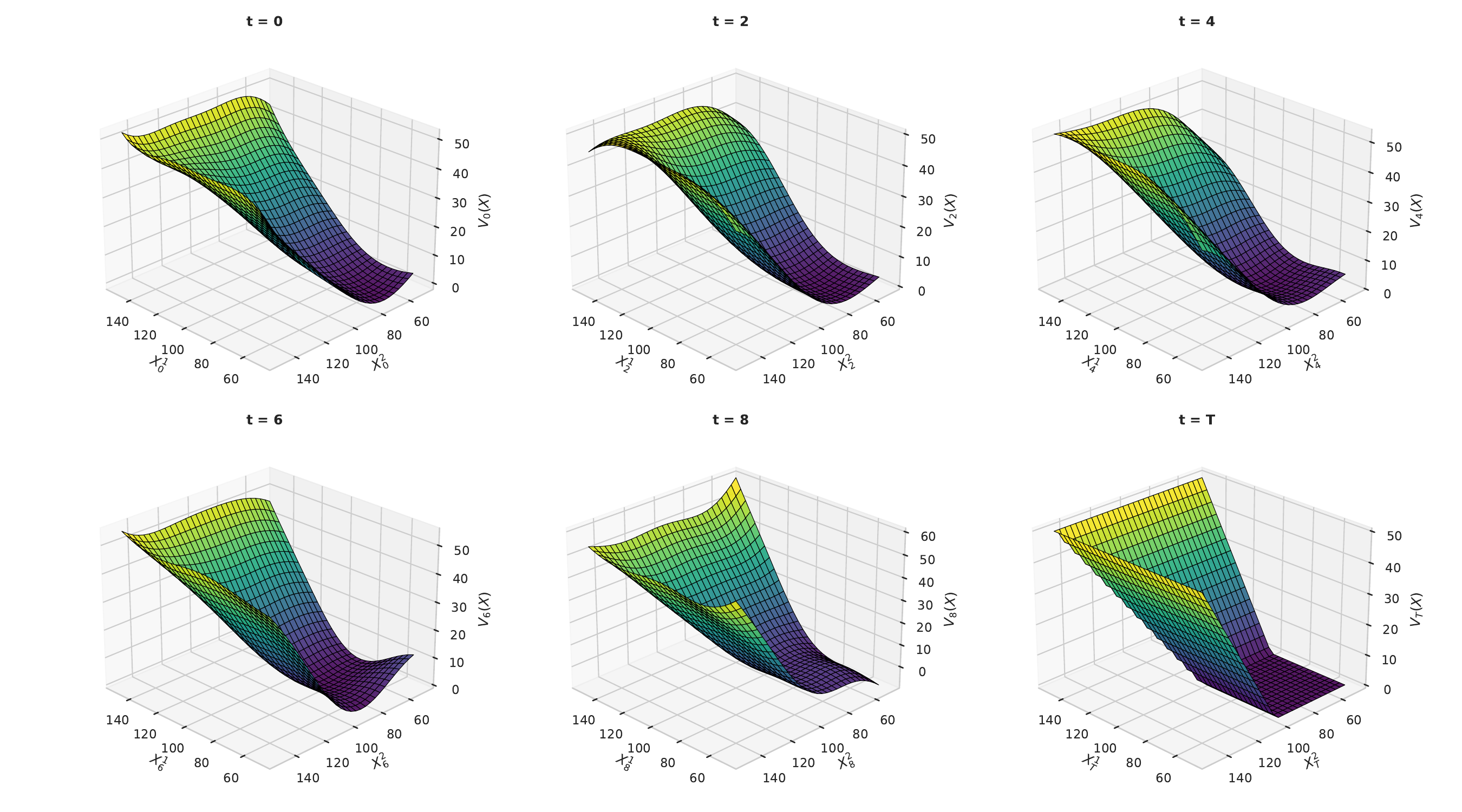}
\caption{Value function estimates for the Max-Call option ($d = 2$), see Table~\ref{tab:call}.}
\label{fig:call}
\end{figure}

\section{Sufficient Conditions for Well-Posedness}
\label{app:measure}

We discuss here the minimal condition needed for our formulation to be well posed in relation to Assumption~\ref{ass:L_2}. Given a function $f \in L^2_{\mu_{t+1}}$, we study under which condition $P_t^u f$ belongs to $L^2_{\mu_t}$, with
\begin{equation}
P_t^u f(x) = \int_{\X_{t+1}} f(x') \, P_t^u(x, dx').
\end{equation}
Using Jensen’s inequality:
\begin{equation}
\|P_t^u f\|^2_{L^2_{\mu_t}} 
= \int_{\X_t} \left( \int_{\X_{t+1}} f(x') \, P_t^u(x, dx') \right)^2 \mu_t(dx)
\leq \int_{\X_{t+1}} f(x')^2 \underbrace{\int_{\X_t} P_t^u(x, dx') \mu_t(dx)}_{=: \, q_t^u(dx')}.
\end{equation}
If the pushforward measure $q_t^u$ is absolutely continuous with respect to $\mu_{t+1}$ and admits a bounded Radon–Nikodym derivative, i.e.,
\begin{equation}
\left\| \frac{dq_t^u}{d\mu_{t+1}} \right\|_{L^\infty_{\mu_{t+1}}} \leq c_P < \infty,
\end{equation}
then we obtain:
\begin{equation}
\label{eq:x}
\|P_t^u f\|_{L^2_{\mu_t}} \leq c_P^{1/2} \|f\|_{L^2_{\mu_{t+1}}},
\end{equation}
which is exactly the requirement in Assumption~\ref{ass:L_2}.

Although condition~\ref{eq:x} may appear strong, it can often be verified in applications. Indeed, observe that
\begin{equation}
\|f\|^2_{L^2_{\mu_{t+1}}}
= \int_{\X_t} \mathbb{E}\left[f(\pi_t(x, \bar{u}_t(x), Z_{t+1}))^2\right] \mu_t(dx).
\end{equation}
Therefore, a sufficient structural condition for~\ref{eq:x} to hold is the pointwise inequality:
\begin{equation}
\sup_{u \in \mathcal{U}_t} \mathbb{E}\left[f(\pi_t(x, u, Z_{t+1}))\right]^2 
\leq c_{P,t}^2 \, \mathbb{E}\left[f(\pi_t(x, \bar{u}_t(x), Z_{t+1}))^2\right], \quad \text{for } \mu_t\text{-a.e. } x \in \X_t.
\end{equation}
This provides a more verifiable condition for establishing Assumption~\ref{ass:L_2}, especially in simulation-based settings where the behavior distribution is known or controlled.

\end{document}